\documentclass[preprint,12pt,authoryear]{elsarticle}

\usepackage{import}     
\usepackage{graphicx}   
\usepackage{url}        
\usepackage{comment}    
\usepackage{pdfpages}   
\usepackage{float}      
\usepackage{wrapfig}    

\usepackage{caption}        
\usepackage{subcaption}     
\usepackage[nottoc]{tocbibind}  

\usepackage{bm}         
\usepackage{bbm}        
\usepackage{amsmath}    
\usepackage{amssymb}    
\usepackage{amsthm}     
\usepackage{textcomp}   
\usepackage{gensymb}    

\usepackage{siunitx}    
\usepackage[british]{babel}     
\addto\extrasbritish{   
    \def\sectionautorefname{Section}                
}

\usepackage{tikz}
\usetikzlibrary{shapes,arrows}
\tikzstyle{decision} = [diamond, draw, fill=blue!20, 
    text width=4.5em, text badly centered, node distance=3cm, inner sep=0pt]
\tikzstyle{block} = [rectangle, draw, fill=blue!20, 
    text width=5em, text centered, rounded corners, minimum height=4em]
\tikzstyle{line} = [draw, -latex']
\tikzstyle{cloud} = [draw, ellipse,fill=red!20, node distance=3cm,
    minimum height=2em]

\usepackage{hyperref}   
\hypersetup{    
    colorlinks,
    citecolor = black,
    filecolor = black,
    linkcolor = black,
    urlcolor = black
}

\input{setup.sty}

\usepackage{natbib}  

\DeclareMathOperator*{\argmax}{argmax}

\newcommand{\Cov}{\operatorname{Cov}}

\def\E{\mathbb{E}}

\def\eps{\epsilon}

\def\Var{\mathop{\mathrm{Var}}}

\def\ba{\boldsymbol{a}}

\def\be{\boldsymbol{e}}
\def\bu{\boldsymbol{u}}
\def\bx{\boldsymbol{x}}
\def\by{\boldsymbol{y}}
\def\bp{\boldsymbol{p}}

\def\bbeta{\boldsymbol{\beta}}

\def\balpha{\boldsymbol{\alpha}}

\def\btheta{\boldsymbol{\theta}}

\def\bpi{\boldsymbol{\pi}}

\def\bzero{\boldsymbol{0}}
\def\bSigma{\boldsymbol{\Sigma}}

\def\bmu{\boldsymbol{\mu}}

\def\bX{\mathbf{X}}

\def\bS{\mathbf{S}}
\def\bU{\mathbf{U}}
\def\bA{\mathbf{A}}
\def\bB{\mathbf{B}}

\journal{Computational Statistics \& Data Analysis}

\begin{document}

\begin{frontmatter}

\title{Fairness Constraints in High-Dimensional Generalized Linear Models}

\author[inst1]{Yixiao Lin\corref{cor1}}
\ead{yl2883@cornell.edu}
\author[inst1]{James G. Booth}
\ead{jb383@cornell.edu}

\cortext[cor1]{Corresponding author.}
\address[inst1]{Department of Statistics and Data Science, Cornell University}

\begin{abstract}
Most fairness-aware learning methods assume that sensitive attributes are observed,
an assumption that may fail because of privacy, legal, or data-collection constraints.
We develop a framework for fairness-aware generalized linear models when the sensitive
attribute is latent, possibly multi-category, and the predictors may be high-dimensional.
Candidate proxy variables are modeled using Gaussian mixtures for continuous predictors
and product-multinomial mixtures for categorical predictors. The resulting posterior
group-membership probabilities are used to residualize the predictors and reduce their
association with the latent sensitive structure. For continuous outcomes, we use constrained least squares to limit the contribution of
the estimated sensitive attribute to prediction variability. For binary outcomes, we use
penalized logistic regression to reduce dependence between predicted probabilities and
estimated group membership. In high-dimensional settings, the procedure is combined
with SEMMS variable selection to obtain sparse models. We establish identifiability and latent-class recovery results for Gaussian and categorical
mixtures and derive expressions that quantify the loss of predictive power after
residualization.Simulations and real-data applications demonstrate favorable accuracy-fairness performance.
\end{abstract}
\begin{keyword}
algorithmic fairness \sep latent sensitive attributes \sep finite mixture models \sep generalized linear models \sep variable selection
\end{keyword}

\end{frontmatter}

\section{Introduction}
\label{sec:Intro}

\subsection{Motivation}

Algorithmic decision-making pervades our lives, profoundly shaping outcomes in
areas such as email filtering, image tagging, personalized news feeds, credit
scoring, and job assessments. While these systems offer efficiency and
convenience, they also raise significant concerns regarding transparency,
accountability, and fairness. These concerns are especially pressing when the
historical data used to train such systems reflect biases associated with
sensitive attributes, such as gender, race, or religion. When unchecked, these
biases can lead to systematically unfair treatment of certain groups and
perpetuate existing societal inequities.

A large body of work in fair machine learning seeks to mitigate these risks by
constraining or modifying learning algorithms so that predictions satisfy
group-based fairness notions, such as demographic parity or equalized odds.
However, two practical challenges remain relatively underexplored. First, most
methods assume that sensitive attributes are directly observed and available
throughout the development and deployment of a system. In many real-world
applications, this assumption fails due to privacy concerns, ethical
considerations, or legal restrictions~\citep{coston2019missingprotected}. Second,
modern applications often involve high-dimensional feature spaces in which
only a small subset of predictors is truly informative. Variable selection is
therefore essential; yet, the interaction between feature selection, fairness
constraints, and unobserved sensitive attributes is poorly understood.

In this paper, we address these two challenges simultaneously. We consider
settings in which sensitive attributes are multi-category, unobserved, and
must be inferred from auxiliary predictors, while the main prediction task is
modeled using generalized linear models (GLMs) in either low- or
high-dimensional regimes. Our goal is to construct prediction rules that are
both accurate and fair with respect to the latent sensitive attribute, while
retaining the interpretability and sparsity benefits of modern variable
selection methods.

\subsection{Related work}
Existing approaches to algorithmic fairness can be broadly categorized into three classes.

\begin{enumerate}
    \item \textbf{Preprocessing approaches.}
    These methods modify the training data before model fitting in order to mitigate discriminatory patterns, for example, through reweighting, resampling, or data transformation to balance the representation of sensitive groups \citep{feldman2015disparate,kamiran2012preprocessing}.
    \item \textbf{In-processing approaches.}
    These methods incorporate fairness constraints, regularization terms, or other fairness-aware objectives directly into the learning procedure so that model training jointly optimizes predictive performance and fairness criteria \citep{zafar2019flexible,agarwal2018reductions,kamishima2012prejudice,kleindessner2022pairwise}.
    \item \textbf{Post-processing approaches.}
    These methods adjust the outputs of an already trained model to satisfy fairness criteria, which is especially useful when retraining the model or modifying the original learning algorithm is infeasible \citep{hardt2016equality,pleiss2017calibration}.
\end{enumerate}

Despite their diversity, most of these methods rely on access to sensitive attributes, either to define fairness criteria or to implement interventions. When such attributes are unavailable, fairness becomes substantially more challenging. Existing work has considered fairness under missing protected attributes, proxy-sensitive features, or latent demographic structure \citep{coston2019missingprotected,zhu2022learning,zhao2022fairrf,lahoti2020arl,yan2020faircb,grari2022causalvae,ni2024knowledge}. For instance, \citet{zhu2022learning} proposes a probabilistic graphical model in which a variational autoencoder estimates latent sensitive attributes from relevant features, and fairness interventions are then applied to the learned representation. While this proxy-based strategy is promising, it typically assumes prior knowledge of which features are relevant for recovering the sensitive attribute, an assumption that may be unrealistic in complex applications with many covariates and intricate dependency structures. More broadly, different fairness notions may be incompatible with one another or with predictive calibration in general settings \citep{kleinberg2017tradeoffs,pleiss2017calibration,chouldechova2017fairprediction}.

Fairness in high-dimensional settings, particularly when combined with
variable selection, has also received limited attention. \citet{tarzanagh2021fair}
studies fairness-aware clustering and introduces $\ell_1$-regularization to
promote sparsity in feature selection. However, analogous developments for
supervised learning—such as regression or classification with fairness
constraints and automatic variable selection—are far less mature.
High-dimensional predictors exacerbate the difficulty of ensuring fairness:
not only may many features carry implicit information about group membership,
but selecting a small subset of predictors can itself introduce or amplify
disparities if it preferentially retains group-informative variables.
For broader overviews of fairness in machine learning, see \citet{mehrabi2021survey} and \citet{corbettdavies2023measure}.
\subsection{Contributions}

In response to these challenges, we develop a unified framework for
fairness-aware generalized linear models when multi-category sensitive
attributes are unobserved and predictors may be high-dimensional. Our main
contributions are threefold.

\begin{enumerate}
    \item \textbf{Latent sensitive attribute estimation via mixture models.}
    Our first contribution is to model the unobserved sensitive attribute through a latent mixture structure, estimated via an EM-type procedure \citep{dempster1977em,mclachlan2000finite}. We assume that a subset of predictors depends on an unobserved
    multi-category sensitive attribute and model this dependence using either
    Gaussian mixture models or mixtures of product-multinomials. We derive
    EM-based estimators for the corresponding parameters and use the posterior
    class probabilities as estimates of the sensitive attribute. To improve
    robustness when many predictors are only weakly related to the sensitive
    attribute, we also propose a simple screening strategy that estimates the
    sensitive attribute from each candidate predictor separately and selects the estimate that yields the best
    predictive performance or fairness according to an appropriate criterion.
    
    \item \textbf{Fairness-aware GLMs with variable selection.} Given the
    estimated sensitive attribute, we regress the predictors on its posterior
    expectation and construct residuals that are less informative about group
    membership. We then fit fairness-aware GLMs using these residuals: a
    constrained least-squares formulation for continuous responses that limits
    the coefficient of determination obtained by regressing predictions on the
    estimated sensitive attribute, and a penalized logistic regression
    formulation for binary responses that incorporates a fairness penalty
    based on centered correlations. In high-dimensional settings, we integrate
    this fairness machinery with the SEMMS empirical Bayes variable selection
    procedure \citep{bar2020semms}, yielding sparse, fairness-aware GLMs that
    automatically select a small set of predictive features.
    
    \item \textbf{Theoretical and empirical characterization of
    accuracy-fairness trade-offs.} Compared with existing fairness-aware learning approaches \citep{komiyama2018nonconvex,zafar2019flexible,zhao2022fairrf}, our method addresses the practically important setting in which the sensitive attribute is not directly observed. We study the conditions under which the
    mixture models for the predictors are identifiable and derive explicit
    expressions and bounds for the probability of correctly recovering the
    sensitive attribute under both Gaussian and categorical mixtures. For
    regression, we obtain closed-form expressions for the coefficients of
    determination before and after regressing out the sensitive attribute,
    thereby quantifying the loss of predictive power induced by fairness
    constraints and characterizing the fundamental accuracy-fairness
    trade-off in our framework. Through extensive simulations and applications
    to benchmark datasets, including ADULT, COMPAS, and ARRHYTHMIA, we validate
    these theoretical insights and show that our approach can substantially
    reduce group disparities with only a modest loss in predictive accuracy.
\end{enumerate}

\subsection{Notation and problem setup}
\label{subsec:intro-notation}

We now formalize the problem setup and introduce the notation used throughout the
paper. Each $p$-dimensional vector is treated as a column vector and identified
with a $p \times 1$ matrix. Let $n$ denote the number of observations. The
$i$th observation is denoted by $(\bx_i, y_i)$, where $\bx_i \in \mathbb{R}^p$
is the $p$-dimensional feature vector and $y_i$ is the response. The response
can be either continuous or binary, i.e. $y_i \in \mathbb{R}$ or
$y_i \in \{0,1\}$. Let
\[
\by = (y_1,\dots,y_n)^\top \in \mathbb{R}^{n}
\quad\text{and}\quad
\bX = [\bx_1,\dots,\bx_n]^\top \in \mathbb{R}^{n\times p}
\]
denote the response vector and feature matrix, respectively.

We assume the existence of an unobserved multi-category sensitive attribute
$A_i \in \{1,\dots,K\}$ for each observation. For convenience, we also use 
one-hot encoding
\[
\ba_i = (a_{i1},\dots,a_{iK})^\top \in \{0,1\}^K, \qquad
\sum_{k=1}^K a_{ik} = 1,
\]
and collect these into the $n \times K$ matrix
$\bA = [\ba_1,\dots,\ba_n]^\top$. The sensitive attribute $A_i$ is not
observed, but we assume that a subset of predictors carries information about
it. Specifically, we partition the columns of $\bX$ as
$\bX = [\bX^{\ba}, \bX^{\bar{\ba}}]$, where $\bX^{\ba}$ consists of predictors
whose distribution depends on the sensitive attribute and
$\bX^{\bar{\ba}}$ consists of predictors that are conditionally independent
of $A_i$.

Our main prediction model is a generalized linear model. Let
$\mu_i = \E(y_i \mid \bx_i, \bbeta)$ and $g(\cdot)$ denote a link function. Then we consider linear predictors of the form
\begin{equation}
    \label{eq:linear_predictor}
    g(\mu_i) = \eta_i = \sum_{j=1}^p \beta_j x_{ij},
\end{equation}
where $\bbeta = (\beta_1,\dots,\beta_p)^\top$ is a vector of regression
coefficients. We study two regimes:
\begin{itemize}
    \item a \emph{low-dimensional} setting in which $\bbeta$ is an unknown but
    fixed parameter vector and $p < n$; and
    \item a \emph{high-dimensional} setting in which $p$ may exceed $n$ and
    sparsity is induced through an empirical Bayes prior of the form
    $\beta_j = u_j \gamma_j$, where $u_j \sim N(\mu_u,\sigma^2)$ and
    $\gamma_j \in \{-1,0,1\}$ with
    $\Pr(\gamma_j = g) = \pi_g$ for $g \in \{-1,0,1\}$. In this case, the
    $j$th predictor is effectively included in the model only if
    $\gamma_j \neq 0$, as determined by the SEMMS algorithm
    \citep{bar2020semms}.
\end{itemize}

To estimate the latent sensitive attribute, we model the rows of
$\bX^{\ba}$ as independent draws from a finite mixture distribution, the
components of which correspond to the levels of $A_i$. In particular, we assume that
\begin{equation}
    \label{eq:mixture_model}
    \bx_i^{\,\ba} \mid A_i = k \sim f_k(\cdot;\btheta_k),
    \qquad k = 1,\dots,K,
\end{equation}
where $f_k$ is either a multivariate Gaussian density with component-specific
means (and possibly covariances) or a product-multinomial model with
component-specific probability vectors. Writing
$\bp = (p_1,\dots,p_K)^\top$ for the mixing proportions, the marginal density
of $\bx_i^{\,\ba}$ is then
\[
    f(\bx_i^{\,\ba}) = \sum_{k=1}^K p_k f_k(\bx_i^{\,\ba};\btheta_k).
\]
We use the EM algorithm to obtain maximum likelihood estimates of
$\bp$ and $\{\btheta_k\}_{k=1}^K$, and we take the resulting posterior
probabilities
\[
    \hat a_{ik} = \Pr(A_i = k \mid \bx_i^{\,\ba}; \hat\bp, \hat\btheta_1,\dots,\hat\btheta_K),
    \qquad k = 1,\dots,K,
\]
as estimates of the entries of $\ba_i$. Collecting these into
$\hat\bA = [\hat\ba_1,\dots,\hat\ba_n]^\top$, we then regress $\bX$ on
$\hat\bA$ and compute the residual matrix
$\hat\bU = [\hat\bu_1,\dots,\hat\bu_n]^\top$. The final step is to solve an
optimization problem that balances fairness, as measured by the dependence of
predictions on $\hat\bA$, against predictive accuracy, as measured by the
ability of $\hat\bU$ to predict $\by$.

Each of these steps is explained in more detail in Sections~\ref{sec:method}
and~\ref{sec:theory}. Section~\ref{sec:simulations} presents results from
simulation studies, while Section~\ref{sec:realdata} demonstrates the approach
on real datasets. Section~\ref{sec:conclusion} concludes with a discussion of
our findings and directions for future work.

\section{Methodology}
\label{sec:method}

In this section, we describe our three-stage procedure: (i) estimating the
latent sensitive attribute from auxiliary predictors using mixture models;
(ii) constructing residualized predictors that are less informative about
group membership; and (iii) fitting fairness-aware generalized linear models
(GLMs), with optional variable selection via SEMMS in high-dimensional
settings.

\subsection{Estimating the latent sensitive attribute}
\label{subsec:latentA}


Given the parameters $(\bp,\btheta_1,\dots,\btheta_K)$, the posterior
probabilities for the latent classes are
\begin{equation}
    \label{eq:posterior-a}
    a_{ik} = \Pr(A_i = k \mid \bx_i^{\,\ba})
    = \frac{p_k f_k(\bx_i^{\,\ba};\btheta_k)}
           {\sum_{\ell=1}^K p_\ell f_\ell(\bx_i^{\,\ba};\btheta_\ell)},
    \qquad k = 1,\dots,K.
\end{equation}
We collect these into the soft group-membership vector
$\tilde\ba_i = (a_{i1},\dots,a_{iK})^\top$ and estimate it by substituting
maximum likelihood estimates for $(\bp,\btheta_1,\dots,\btheta_K)$ into
\eqref{eq:posterior-a}. Writing
$\hat\ba_i = (\hat a_{i1},\dots,\hat a_{iK})^\top$ for the resulting estimate
of $\tilde\ba_i$, we obtain the estimated sensitive attribute matrix
$\hat\bA = [\hat\ba_1,\dots,\hat\ba_n]^\top$. We estimate model parameters using the EM algorithm \citep{dempster1977em}, a standard approach for latent-variable and finite-mixture models \citep{mclachlan2000finite}.

\paragraph{EM algorithm.}
At
iteration $t+1$, the E-step computes the current posterior probabilities
\begin{equation}
    \label{eq:em-E}
    \hat a_{ik}^{(t+1)}
    = \frac{\hat p_k^{(t)} f_k(\bx_i^{\,\ba};\hat\btheta_k^{(t)})}
           {\sum_{\ell=1}^K \hat p_\ell^{(t)} f_\ell(\bx_i^{\,\ba};\hat\btheta_\ell^{(t)})},
    \qquad i = 1,\dots,n,\; k = 1,\dots,K,
\end{equation}
and the M-step updates the mixing proportions and component parameters via
\begin{align}
    \label{eq:em-M-p}
    \hat p_k^{(t+1)} 
    &= \frac{1}{n} \sum_{i=1}^n \hat a_{ik}^{(t+1)}, 
    \qquad k = 1,\dots,K, \\
    \label{eq:em-M-theta}
    \hat\btheta^{(t+1)}
    &= \argmax_{\btheta}
       \sum_{i=1}^n \sum_{k=1}^K 
       \hat a_{ik}^{(t+1)} 
       \log f_k(\bx_i^{\,\ba};\btheta_k),
\end{align}
where $\btheta = (\btheta_1,\dots,\btheta_K)$. The EM iterations are repeated
until convergence. We denote the final estimates by
$(\hat\bp,\hat\btheta_1,\dots,\hat\btheta_K)$ and set
\[
    \hat a_{ik}
    = \Pr\bigl(A_i = k \mid \bx_i^{\,\ba};\hat\bp,\hat\btheta_1,\dots,\hat\btheta_K\bigr)
\]
as in \eqref{eq:posterior-a}. Collecting the rows yields the estimated
sensitive attribute matrix $\hat\bA = [\hat\ba_1,\dots,\hat\ba_n]^\top$.

In what follows we consider two specific choices for the component densities
$f_k$: a Gaussian mixture model for continuous predictors and a product
multinomial mixture for categorical predictors. When the relevant predictors
contain both continuous and categorical variables, we use the natural hybrid
extension obtained by multiplying the Gaussian and categorical component
factors, together with the corresponding EM updates for each block.

\subsubsection{Gaussian mixtures}\label{subsubsec: Gaussian est}

When the predictors in $\bx_i^{\,\ba}$ are continuous, we use a Gaussian
mixture model with a shared covariance matrix. Specifically,
\[
    f_k(\bx_i^{\,\ba};\btheta_k) 
    = \phi(\bx_i^{\,\ba};\bmu_k,\bSigma)
    = |2\pi\bSigma|^{-1/2}
      \exp\left\{-\frac{1}{2}
      (\bx_i^{\,\ba} - \bmu_k)^\top\bSigma^{-1}
      (\bx_i^{\,\ba} - \bmu_k)\right\},
\]
where $\btheta_k = \bmu_k$ and $\bSigma$ are the common covariance matrix.
The M-step updates \eqref{eq:em-M-theta} simplify to
\begin{align}
    \label{eq:em-gaussian-mu}
    \hat\bmu_k^{(t+1)}
    &= \frac{\sum_{i=1}^n \hat a_{ik}^{(t+1)} \bx_i^{\,\ba}}
            {\sum_{i=1}^n \hat a_{ik}^{(t+1)}},
            \qquad k = 1,\dots,K, \\
    \label{eq:em-gaussian-Sigma}
    \hat\bSigma^{(t+1)}
    &= \frac{\sum_{i=1}^n \sum_{k=1}^K 
            \hat a_{ik}^{(t+1)}
            (\bx_i^{\,\ba} - \hat\bmu_k^{(t+1)})
            (\bx_i^{\,\ba} - \hat\bmu_k^{(t+1)})^\top}
            {\sum_{i=1}^n \sum_{k=1}^K \hat a_{ik}^{(t+1)}}.
\end{align}

\subsubsection{Categorical mixtures}\label{subsubsec: Categorical est}

When $\bx_i^{\,\ba}$ consists of $D$ independent categorical predictors with
$m_d$ levels for the $d$th predictor, we represent each predictor by
$(m_d-1)$ dummy variables and obtain a design matrix
$\bX^{\ba} \in \mathbb{R}^{n\times p_{\ba}}$ with
$p_{\ba} = \sum_{d=1}^D (m_d-1)$. For each component $k$, we let
$\bpi_{kd} = (\pi_{kd1},\dots,\pi_{kd m_d})^\top$ denote the probability
vector for the $d$th categorical predictor, with
$\sum_{\ell=1}^{m_d} \pi_{kd\ell} = 1$. The component distribution factorizes
as
\[
    f_k(\bx_i^{\,\ba};\btheta_k)
    = \prod_{d=1}^D \prod_{\ell=1}^{m_d}
      \pi_{kd\ell}^{\,z_{id\ell}},
\]
where $z_{id\ell} = 1$ if the $d$th categorical predictor of observation $i$
takes its $\ell$th level and $z_{id\ell} = 0$ otherwise. Here
$\btheta_k = \{\bpi_{kd}\}_{d=1}^D$.

The M-step update \eqref{eq:em-M-theta} yields the usual weighted
multinomial maximum likelihood estimator:
\begin{equation}
    \label{eq:em-multinomial-pi}
    \hat\pi_{kd\ell}^{(t+1)}
    = \frac{\sum_{i=1}^n \hat a_{ik}^{(t+1)} z_{id\ell}}
           {\sum_{i=1}^n \hat a_{ik}^{(t+1)}},
    \qquad k = 1,\dots,K,\; d = 1,\dots,D,\; \ell = 1,\dots,m_d.
\end{equation}

\subsection{Selecting relevant predictors for mixture modeling}
\label{subsec:screening}

In practice, estimating $\hat\bA$ using all predictors can be inaccurate when
many columns of $\bX$ are weakly related to the sensitive attribute or not
related at all. Including such predictors inflates variability without
improving the identifiability or estimation accuracy of the mixture model. To
mitigate this, we adopt a simple screening strategy.

For each candidate predictor (or small subset of predictors) $j$, we fit a
mixture model using only the corresponding columns $\bX_j^{\ba}$, compute
posterior probabilities $\hat\ba_i^{(j)}$, and obtain an estimate
$\hat\bA^{(j)}$. Using $\hat\bA^{(j)}$, we carry out the residualization and
fairness-aware modeling steps described in Sections~\ref{subsec:residual}
and~\ref{subsec:regression}--\ref{subsec:classification}, obtaining
predictions $\hat\by^{(j)}$ for the primary response.

In regression problems, we select the estimate $\hat\bA^{(j)}$ that maximizes
the correlation between $\hat\by^{(j)}$ and $\by$:
\[
    j^\star 
    = \argmax_j \operatorname{Cor}(\hat\by^{(j)}, \by).
\]
In binary classification problems, we instead use a group-wise mean distance
fairness measure. This is the screening criterion used in our real-data
classification experiments, because it tracks the downstream fairness notion
employed later in the penalized logistic model. For a given $j$, let
\[
    w_i^{(j)} = \max_{k} \hat a_{ik}^{(j)} - \frac{1}{K},
\]
and define the soft group $k$ as
\[
    S_k^{(j)} = \{ i : \hat a_{ik}^{(j)} 
        \ge \hat a_{i\ell}^{(j)} \text{ for all } \ell \neq k \}.
\]
We compute the maximum absolute difference between weighted group means:
\begin{equation}
    \label{eq:MD}
    \mathrm{MD}_j
    = \max_{1\le k,\ell\le K}
      \left|
      \frac{\sum_{i\in S_k^{(j)}} w_i^{(j)} y_i}
           {\sum_{i\in S_k^{(j)}} w_i^{(j)}}
      -
      \frac{\sum_{i\in S_\ell^{(j)}} w_i^{(j)} y_i}
           {\sum_{i\in S_\ell^{(j)}} w_i^{(j)}}
      \right|.
\end{equation}
Smaller values of $\mathrm{MD}_j$ indicate more similar performance across
the estimated groups. Depending on the application, we may choose $\hat\bA^{(j)}$
to either minimize $\mathrm{MD}_j$ (emphasizing fairness) or to balance it
against predictive accuracy.

In the sequel, we assume that a particular estimate $\hat\bA$ has been
selected by one of the above criteria and describe how to construct fair
predictive models conditioned on $\hat\bA$.

\subsection{Residualization}
\label{subsec:residual}

The residualization step is intended to reduce the influence of variables most strongly associated with the latent sensitive structure, in the spirit of removing discriminatory or proxy information before downstream prediction \citep{feldman2015disparate,kamiran2012preprocessing,zhao2022fairrf}.

Given the estimated sensitive attribute matrix $\hat\bA$, we seek predictors
that remain informative about $y$ but are less directly associated with group
membership. To this end, we regress the full feature matrix $\bX$ on
$\hat\bA$. Specifically, we consider the linear model
\[
    \bX = \mathbf{1}_n \bmu^\top + \hat\bA \bB + \bU,
\]
where $\mathbf{1}_n$ is the $n$-dimensional vector of ones, $\bmu$ is a
$p$-dimensional intercept vector, $\bB$ is a $K\times p$ matrix of regression
coefficients, and $\bU$ is an $n\times p$ matrix of residuals. We estimate
$(\bmu,\bB)$ by ordinary least squares, column-wise or jointly, and define the
residualized predictors as $\hat\bU = [\hat\bu_1,\dots,\hat\bu_n]^\top$.

Intuitively, $\hat\bU$ contains the variation in $\bX$ that is orthogonal (in a
least-squares sense) to the estimated sensitive attribute $\hat\bA$. By
design, linear functions of $\hat\bu_i$ are less correlated with $\hat\ba_i$
than linear functions of $\bx_i$, which makes them a natural basis for
fairness-aware modeling.

\subsection{Fairness-aware regression}
\label{subsec:regression}

Our constrained formulation is related to prior fairness-constrained learning approaches in regression and classification \citep{komiyama2018nonconvex,zafar2019flexible,agarwal2018reductions}.

We first consider the case of a continuous response $y_i \in \mathbb{R}$. Using
the estimated sensitive attribute $\hat\ba_i$ and residualized predictors
$\hat\bu_i$, we define the linear predictor
\begin{equation}
    \label{eq:linear-predictor-aug}
    \eta_i = \beta_0 + \hat\ba_i^\top \balpha + \hat\bu_i^\top \bbeta,
    \qquad i = 1,\dots,n,
\end{equation}
where $\beta_0 \in \mathbb{R}$ is an intercept and
$\balpha \in \mathbb{R}^K$, $\bbeta \in \mathbb{R}^p$ are coefficient vectors.
Because the entries of $\hat\ba_i$ sum to one, the intercept and all $K$ membership columns are linearly dependent. In implementation, we therefore omit one membership column (or, equivalently, impose a constraint such as $\mathbf{1}_K^\top\balpha=0$).
The corresponding predictions are $\hat y_i = \eta_i$.

Our fairness criterion is based on the empirical coefficient of
determination obtained by regressing $\hat\by = (\hat y_1,\dots,\hat y_n)^\top$
on $\hat\bA$. Let $\bS_{\hat\bA}$ and $\bS_{\hat\bU}$ denote the sample
covariance matrices of the rows of $\hat\bA$ and $\hat\bU$, respectively. It
can be shown (see \citep{komiyama2018nonconvex}) that the $R^2$ of the
regression of $\hat\by$ on $(\hat\bA,\hat\bU)$ and the proportion of variance
explained by $\hat\bA$ alone can be expressed in terms of
$\balpha^\top\bS_{\hat\bA}\balpha$ and $\bbeta^\top\bS_{\hat\bU}\bbeta$. We
enforce fairness by constraining the fraction of explained variance
attributable to $\hat\bA$ to be at most $\varepsilon \in [0,1]$.

Formally, we solve the constrained least-squares problem
\begin{equation}
    \label{eq:reg-constraint}
    \min_{\beta_0,\balpha,\bbeta}
        \sum_{i=1}^n (y_i - \hat y_i)^2
    \quad
    \text{subject to } R^2(\hat\by \mid \hat\bA) \le \varepsilon,
\end{equation}
where $R^2(\hat\by \mid \hat\bA)$ is the coefficient of determination of the
regression of $\hat\by$ on $\hat\bA$. After centering the variables, problem
\eqref{eq:reg-constraint} is equivalent to minimizing
\begin{equation}
    \label{eq:reg-objective}
    \balpha^\top \bS_{\hat\bA} \balpha
    + \bbeta^\top \bS_{\hat\bU} \bbeta
    - \frac{2}{n} \sum_{i=1}^n 
      y_i (\hat\ba_i^\top \balpha + \hat\bu_i^\top \bbeta)
\end{equation}
subject to the quadratic constraint
\begin{equation}
    \label{eq:R2-constraint}
    (1-\varepsilon)\,\balpha^\top \bS_{\hat\bA} \balpha
    - \varepsilon\,\bbeta^\top \bS_{\hat\bU} \bbeta \le 0.
\end{equation}
When $\varepsilon = 0$, the constraint \eqref{eq:R2-constraint} forces
$\balpha = \bzero$, which corresponds to a regression model that is fair with
respect to $\hat\bA$ (predictions depend only on $\hat\bu_i$). For
$\varepsilon$ close to $1$, the constraint becomes weak, and the solution
approximates the unconstrained least-squares estimator. Intermediate values of
$\varepsilon$ trace out an explicit accuracy–fairness trade-off.

In the low-dimensional setting ($p < n$), the problem
\eqref{eq:reg-objective}--\eqref{eq:R2-constraint} can be solved using standard
quadratically constrained quadratic programming. In the high-dimensional
setting ($p \gg n$), we combine this fairness formulation with the SEMMS
empirical Bayes variable selection prior described in
Section~\ref{subsec:intro-notation}, using the fairness constraint to guide
the selection toward predictors that are both predictive and less
group-informative.

\subsection{Fairness-aware binary classification}
\label{subsec:classification}

We now consider binary responses $y_i \in \{0,1\}$. Using the same linear
predictor as in \eqref{eq:linear-predictor-aug}, we define the success
probability using the logistic link:
\[
    \pi_i = \Pr(y_i = 1 \mid \hat\ba_i,\hat\bu_i)
          = \frac{1}{1 + \exp(-\eta_i)}.
\]
Let $\hat\bpi = (\pi_1,\dots,\pi_n)^\top$. The standard logistic negative
log-likelihood is
\[
    \ell_{\mathrm{logit}}(\beta_0,\balpha,\bbeta)
    = -\sum_{i=1}^n \left\{ 
        y_i \log \pi_i + (1-y_i)\log(1-\pi_i)
      \right\}.
\]

To encourage fairness, we penalize the dependence between the predictions and
the estimated sensitive attribute. Following \citet{zhao2022fairrf}, we use a
correlation-based measure
\[
    C(\hat\bA, \hat\bpi)
    = \bigl\|\hat\bA^\top \mathbf{C} \,\hat\bpi \bigr\|_2,
\]
where $\mathbf{C} = \mathbf{I}_n - \frac{1}{n}\mathbf{1}_n \mathbf{1}_n^\top$
is the centering matrix. Large values of $C(\hat\bA,\hat\bpi)$ indicate that
the predicted probabilities differ substantially across groups, while small
values indicate more similar predictions.

We therefore minimize the penalized objective
\begin{equation}
    \label{eq:logit-pen}
    \min_{\beta_0,\balpha,\bbeta}
        \ell_{\mathrm{logit}}(\beta_0,\balpha,\bbeta)
        + \lambda\, C(\hat\bA, \hat\bpi),
\end{equation}
where $\lambda \ge 0$ is a tuning parameter that controls the trade-off
between predictive accuracy and fairness. When $\lambda = 0$,
\eqref{eq:logit-pen} reduces to standard logistic regression with
$(\hat\ba_i,\hat\bu_i)$ as predictors. As $\lambda$ increases, the penalty
forces $C(\hat\bA,\hat\bpi)$ toward zero, yielding progressively fairer
classifiers at the cost of higher classification error.

Because $C(\hat\bA,\hat\bpi)$ is non-smooth at the origin, we employ a smooth
approximation
\[
    C_\delta(\hat\bA,\hat\bpi)
    = \sqrt{\bigl\|\hat\bA^\top\mathbf{C}\,\hat\bpi\bigr\|_2^2
            + \delta},
\]
for a small $\delta > 0$, and optimize the resulting smooth objective using
gradient-based methods (e.g., quasi-Newton or adaptive first-order
algorithms). In high-dimensional settings, we again couple this fairness
penalty with the SEMMS prior to obtain sparse, fairness-aware logistic
regression models.

In both regression and classification, varying the tuning parameters
$\varepsilon$ in \eqref{eq:R2-constraint} and $\lambda$ in
\eqref{eq:logit-pen} yields explicit accuracy–fairness trade-off curves, as
illustrated in the simulation and real-data studies in
Sections~\ref{sec:simulations} and~\ref{sec:realdata}.

\section{Theory}
\label{sec:theory}

In this section, we develop theoretical results for the two main components of our framework. First, we study identifiability of the mixture model used to infer the sensitive attribute and characterize the accuracy with which the latent classes can be recovered. Although identifiability of finite Gaussian mixture models is well established in the literature \citep{mclachlan2000finite}, our setting requires additional analysis because the sensitive attribute is unobserved and connected to the observed predictors through a generalized linear model. We consider both Gaussian and categorical mixtures, and show how classification performance depends on the separation between mixture components and the class prior probabilities.

Second, we examine how removing the linear dependence between the predictors and the latent sensitive attribute affects performance in a downstream regression task. Using $R^2$ as a measure of explanatory power, we derive closed-form expressions that make the trade-off between predictive accuracy and fairness explicit.

\subsection{Identifiability}
\label{subsec:identifiability}

While the identifiability of finite Gaussian mixture models is well understood, identifiability for categorical mixture models is more delicate and often poses substantial challenges.  
We begin with a simple motivating example and then provide a general counting argument for the product-multinomial case.

Consider first the simplest categorical mixture predictor
\[
x_i^{a} \;=\; a_i Z_{i1} + (1-a_i)Z_{i2}, \qquad
a_i \sim \mathrm{Ber}(p),\quad
Z_{i1} \sim \mathrm{Ber}(\theta_1),\quad
Z_{i2} \sim \mathrm{Ber}(\theta_2).
\]
This model involves $n_{\text{mix}} = 3$ parameters $(p,\theta_1,\theta_2)$, but the observable $x_i^{a}$ takes only two possible values: $0$ and $1$. Hence, the distribution of $x_i^{a}$ yields only one free probability (since probabilities must sum to one). In other words, the model is over-parameterized relative to what can be learned from the data, and the parameters are not identifiable.

In the general case, suppose we have a $K$-component categorical mixture with $D$ independent categorical variables, where the $d$th variable has $M_d$ levels. The mixture model then contains
\[
n_{\text{mix}} \;=\; (K-1) + K\sum_{d=1}^D (M_d - 1)
\]
free parameters: $K-1$ mixture proportions and, for each of the $K$ classes, $\sum_{d=1}^D (M_d - 1)$ free parameters in the conditional multinomial probabilities.  
On the other hand, the joint distribution of the $D$ categorical variables has $\prod_{d=1}^D M_d$ possible combinations and therefore requires
\[
n_{\text{joint}} \;=\; \prod_{d=1}^D M_d - 1
\]
parameters to be specified. A necessary (but not sufficient) dimension condition for generic identifiability is that the number of free mixture parameters not exceed the degrees of freedom in the observable joint distribution, that is,
\[
\prod_{d=1}^D M_d - 1 \;\ge\; (K-1) + K\sum_{d=1}^D (M_d - 1).
\]
For example, if $K = 2$ and $D = 2$, and both variables have the same number of levels $M_1 = M_2 = M$, this inequality requires $M \ge 4$; alternatively, if $K = 2$ and $D = 3$, it suffices to have $M_d = 2$ (i.e., three binary variables).  
This counting argument can rule out identifiability, but satisfying it does not by itself prove identifiability. For latent class and related product-multinomial models, generic identifiability typically requires additional algebraic or rank conditions; see, for example, \citet{allman2009identifiability} for general sufficient conditions based on Kruskal-type arguments. Thus, in our setting the inequality above should be interpreted only as a necessary screening rule: when it fails, identifiability is impossible, whereas when it holds, one must still verify additional structural conditions.

\subsection{Accuracy in determining the sensitive variable: Gaussian mixture}
\label{subsec:gauss-accuracy}

We now turn to the accuracy with which the latent sensitive attribute can be recovered when the predictors follow a Gaussian mixture.  
Throughout this subsection, we assume that the mixture parameters are known and focus on the probability that the posterior mode classifier correctly recovers the true component label.

\begin{theorem}[Multivariate Gaussian mixture]
\label{thm:gaussian-multi}
Suppose
\[
[\bx_i^{a} \mid A_i = k] \sim N(\bmu_k,\bSigma_e),\qquad k=1,\ldots,K,
\]
where \(\bSigma_e\) is positive definite, and define
\[
\delta_{k\ell}=\{(\bmu_k-\bmu_\ell)^\top\bSigma_e^{-1}(\bmu_k-\bmu_\ell)\}^{1/2},
\qquad
\delta_{\min}=\min_{k\ne\ell}\delta_{k\ell}.
\]
Let
\[
\mathcal D_k=\left\{x:
\log p_k-\tfrac12(x-\bmu_k)^\top\bSigma_e^{-1}(x-\bmu_k)
\ge
\log p_\ell-\tfrac12(x-\bmu_\ell)^\top\bSigma_e^{-1}(x-\bmu_\ell)
\ \text{for every }\ell\ne k
\right\}
\]
be the posterior-mode decision region for class \(k\), with a fixed rule for ties. Then
\[
\Pr(\hat A_i=A_i)=\sum_{k=1}^K p_k\int_{\mathcal D_k}
\phi_{p_a}(x;\bmu_k,\bSigma_e)\,dx.
\]
Moreover, if all class probabilities are positive and \(\delta_{\min}\to\infty\), then
\(\Pr(\hat A_i=A_i)\to1\).
\end{theorem}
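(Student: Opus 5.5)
The plan has two parts: first the exact formula for the probability of correct recovery, then the limit as the classes separate.

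\textbf{Exact formula.} The classifier is $\hat A_i=k$ exactly when $\bx_i^{a}\in\mathcal D_k$. This is the posterior-mode rule, because by \eqref{eq:posterior-a} the posterior is proportional to
\[
p_k\,\phi_{p_a}(x;\bmu_k,\bSigma_e)=p_k\,|2\pi\bSigma_e|^{-1/2}\exp\{-\tfrac12(x-\bmu_k)^\top\bSigma_e^{-1}(x-\bmu_k)\}.
\]
Since the covariance is common, the determinant factor cancels in the comparison, and taking logs gives exactly the defining inequalities of $\mathcal D_k$. The fixed tie-breaking rule makes the regions (after assigning boundary points) a measurable partition of $\mathbb R^{p_a}$. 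In any case the tie sets are contained in hyperplanes when $\bmu_k\ne\bmu_\ell$, so they have Lebesgue measure zero. Conditioning on $A_i$ by the law of total probability then gives
\[
\Pr(\hat A_i=A_i)=\sum_k p_k\Pr(\bx_i^{a}\in\mathcal D_k\mid A_i=k),
\]
which is the stated integral.

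\textbf{Limit as $\delta_{\min}\to\infty$.} The key step is to bound each misclassification probability by a union bound over pairwise comparisons:
\[
1-\Pr(\bx\in\mathcal D_k\mid A=k)\le\sum_{\ell\ne k}\Pr\bigl(\log p_\ell-\tfrac12 Q_\ell(\bx)\ge \log p_k-\tfrac12 Q_k(\bx)\mid A=k\bigr),
\]
where $Q_j(x)=(x-\bmu_j)^\top\bSigma_e^{-1}(x-\bmu_j)$. Ties are absorbed because the event uses ``$\ge$''. Under $A=k$, write $\bx=\bmu_k+\bSigma_e^{1/2}z$ with $z\sim N(0,I)$. The difference of quadratics is linear in $x$, so
\[
\tfrac12\{Q_\ell(\bx)-Q_k(\bx)\}=\tfrac12\delta_{k\ell}^2-(\bmu_\ell-\bmu_k)^\top\bSigma_e^{-1}(\bx-\bmu_k)=\tfrac12\delta_{k\ell}^2+\delta_{k\ell}W,
\]
with $W\sim N(0,1)$. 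Each pairwise error therefore equals
\[
\Pr\Bigl(W\le -\tfrac{\delta_{k\ell}}2+\tfrac{\log(p_\ell/p_k)}{\delta_{k\ell}}\Bigr)=\Phi\Bigl(-\tfrac{\delta_{k\ell}}2+\tfrac{\log(p_\ell/p_k)}{\delta_{k\ell}}\Bigr).
\]

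The main obstacle is uniformity: we need this bound to go to zero as $\delta_{\min}\to\infty$, with the class probabilities held fixed and positive. Because all $p_k>0$ are fixed, $|\log(p_\ell/p_k)|\le c<\infty$. Once $\delta_{k\ell}\ge\delta_{\min}\ge1$, the argument is at most $-\delta_{\min}/2+c$, which tends to $-\infty$. Hence each term tends to $0$, and the finite sum of $K(K-1)$ terms tends to $0$. Weighting by $p_k$ and summing gives $\Pr(\hat A_i=A_i)\to1$. I would remark that the same argument works if the $p_k$ vary with the separation, provided $\log(p_\ell/p_k)=o(\delta_{\min}^2)$. The boundary of $\mathcal D_k$ itself requires no care, because only the pairwise half-space events are used.
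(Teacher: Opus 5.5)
Your proposal is correct and follows essentially the same route as the paper: the exact formula comes from conditioning on the true class and integrating over the posterior-mode regions, and the limit comes from the pairwise log-posterior difference, which is $N(\log(p_k/p_\ell)+\delta_{k\ell}^2/2,\ \delta_{k\ell}^2)$ under $A=k$, combined with a union bound over the $K-1$ competing classes. Your explicit uniformity step (bounding the argument of $\Phi$ by $-\delta_{\min}/2+c$ once $\delta_{\min}\ge 1$) and your measure-zero remark on the tie sets only make precise points that the paper states without detail.
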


In the special case where $\bx_i^{a}$ is univariate and
\[
[x_i^{a}\mid A_i=j] \sim N(\mu_j,\sigma_e^2), \qquad j=1,\ldots,K,
\]
it is intuitive that the larger the separation between component means, the easier it becomes to identify the true latent class. The next result provides an explicit separation condition that guarantees a desired level of classification accuracy.

\begin{theorem}[Univariate Gaussian mixture]
\label{thm:gaussian-uni}
Suppose \(X\mid A=k\sim N(\mu_k,\sigma_e^2)\), let
\(\delta_{k\ell}=|\mu_k-\mu_\ell|/\sigma_e\), and let
\(\delta_{\min}=\min_{k\ne\ell}\delta_{k\ell}\). Define the decision interval (possibly empty or unbounded)
\[
\mathcal I_k=\left\{x:
\log p_k-\frac{(x-\mu_k)^2}{2\sigma_e^2}
\ge
\log p_\ell-\frac{(x-\mu_\ell)^2}{2\sigma_e^2}
\ \text{for every }\ell\ne k
\right\}.
\]
Then the exact posterior-mode accuracy is
\[
\Pr(\hat A_i=A_i)=\sum_{k=1}^K p_k
\int_{\mathcal I_k}\frac{1}{\sigma_e}\phi\!\left(\frac{x-\mu_k}{\sigma_e}\right)dx.
\]
For a target error level \(\alpha\in(0,1)\), put
\[
z_{\alpha,K}=\Phi^{-1}\!\left(1-\frac{\alpha}{K-1}\right),
\qquad r_p=\frac{p_{\max}}{p_{\min}}.
\]
A sufficient separation condition for \(\Pr(\hat A_i=A_i)\ge1-\alpha\) is
\begin{equation}
\delta_{\min}\ge z_{\alpha,K}+\sqrt{z_{\alpha,K}^2+2\log r_p}.
\label{eq:delta-condition}
\end{equation}
\end{theorem}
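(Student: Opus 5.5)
The plan is to prove the two assertions separately: the exact accuracy formula comes from conditioning on the true class, and the separation condition from a union bound over pairwise Gaussian tail probabilities.

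\textbf{Exact formula.} The posterior mode classifier sets $\hat A_i=k$ exactly when $x\in\mathcal I_k$, with the fixed tie rule. This is because the posterior in \eqref{eq:posterior-a} is proportional to $p_k\phi((x-\mu_k)/\sigma_e)/\sigma_e$, and taking logarithms leaves the quantities compared in the definition of $\mathcal I_k$. Up to the null set of ties, the sets $\mathcal I_k$ partition $\mathbb R$. Conditioning on $A_i=k$ and applying the law of total probability then gives
\[
\Pr(\hat A_i=A_i)=\sum_k p_k\Pr(X\in\mathcal I_k\mid A=k),
\]
which is the displayed integral. This is the univariate case of Theorem~\ref{thm:gaussian-multi}. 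Each $\mathcal I_k$ is an interval because every pairwise inequality in its definition is linear in $x$: the quadratic terms cancel.

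\textbf{Pairwise error bound.} Fix a true class $k$ and write $X=\mu_k+\sigma_e Z$ with $Z\sim N(0,1)$. The event $X\notin\mathcal I_k$ is contained in the union over $\ell\ne k$ of the events
\[
E_{k\ell}=\Bigl\{\log p_\ell-\tfrac{(X-\mu_\ell)^2}{2\sigma_e^2}>\log p_k-\tfrac{(X-\mu_k)^2}{2\sigma_e^2}\Bigr\}.
\]
Put $d=\delta_{k\ell}$ and $s=\operatorname{sign}(\mu_\ell-\mu_k)$. Expanding the squares shows that $E_{k\ell}$ reduces to
\[
sZ\,d>\tfrac{d^2}{2}+\log(p_k/p_\ell).
\]
Since $sZ$ is again standard normal,
\[
\Pr(E_{k\ell}\mid A=k)=1-\Phi\Bigl(\tfrac d2+\tfrac{\log(p_k/p_\ell)}{d}\Bigr)\le 1-\Phi\Bigl(\tfrac d2-\tfrac{\log r_p}{d}\Bigr),
\]
using $\log(p_k/p_\ell)\ge-\log r_p$.

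\textbf{Separation condition.} The function $h(d)=d/2-\log r_p/d$ is increasing on $(0,\infty)$, so $h(\delta_{k\ell})\ge h(\delta_{\min})$ for every pair. Write $z=z_{\alpha,K}$. The inequality $h(d)\ge z$ is equivalent to $d^2-2zd-2\log r_p\ge0$, whose positive root is $z+\sqrt{z^2+2\log r_p}$. Hence \eqref{eq:delta-condition} gives $h(\delta_{\min})\ge z$, and therefore $\Pr(E_{k\ell}\mid A=k)\le 1-\Phi(z)=\alpha/(K-1)$ for every pair. The union bound over the $K-1$ competitors gives $\Pr(\hat A\ne k\mid A=k)\le\alpha$ for each $k$. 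Averaging over $k$ with weights $p_k$ yields $\Pr(\hat A_i=A_i)\ge1-\alpha$.

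The only delicate points are bookkeeping. First, the direction of the inequality and the sign $s$ in the pairwise reduction must be tracked so that the bound holds whichever mean is larger. Second, $z_{\alpha,K}$ is well defined: for $K\ge2$ we have $\alpha/(K-1)<1$, so the quantile exists. It may be negative, but the positive root is still the correct threshold because $\log r_p\ge0$.
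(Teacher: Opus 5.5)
Your proposal is correct and follows essentially the same route as the paper's proof. You condition on the true class for the exact formula, reduce each pairwise comparison to the Gaussian tail $1-\Phi\bigl(\delta_{k\ell}/2+\log(p_k/p_\ell)/\delta_{k\ell}\bigr)$, bound it uniformly via $r_p$ and $\delta_{\min}$, solve the quadratic, and finish with a union bound over the $K-1$ competitors; your explicit sign bookkeeping and the monotonicity of $h(d)=d/2-\log r_p/d$ also make precise the step the paper uses only implicitly when it replaces $\delta_{k\ell}$ by $\delta_{\min}$.
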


Thus, for Gaussian mixtures, both the separation between component means and the imbalance in class probabilities jointly determine how accurately the latent sensitive attribute can be recovered from $\bx_i^{a}$.

\subsubsection{Accuracy in determining the sensitive variable: categorical mixture}
\label{subsubsec:cat-accuracy}

We next consider a product-multinomial mixture model for which the predictors are categorical. Again, we assume that all mixture parameters are known and quantify the probability that the posterior mode classifier correctly identifies the true latent class.

Let $D$ denote the number of categorical predictors, and let the $d$th predictor have $M_d$ categories. For component $k$, let
\[
\btheta_{k,d} = (\theta_{k,d,1},\ldots,\theta_{k,d,M_d})^\top
\]
denote the probability vector governing the distribution of the $d$th predictor, with
$\theta_{k,d,j} = \Pr(X_d = j \mid A = k)$.  

\begin{theorem}[General categorical mixture]
\label{thm:categorical-general}
Assume that all mixture parameters are known and that the posterior class scores are distinct for every category pattern having positive probability. The probability of correctly estimating the sensitive attribute from a product-multinomial mixture under the posterior mode rule is
\[
\Pr\bigl(\arg\max_{l} \hat a_{il} = A_i\bigr)
=
\sum_{k=1}^K
p_k
\sum_{\text{all combinations } j^*}
\left[
\prod_{l \neq k}
\mathbbm{1}\!\left\{
\prod_{d=1}^D
\frac{\theta_{k,d,j^*_d}}{\theta_{l,d,j^*_d}}
>
\frac{p_l}{p_k}
\right\}
\;
\prod_{d=1}^D \theta_{k,d,j^*_d}
\right],
\]
where the inner sum runs over all possible category combinations
$j^* = (j^*_1,\ldots,j^*_D)$ with $j^*_d \in \{1,\ldots,M_d\}$, and
$\mathbbm{1}\{\cdot\}$ denotes the indicator function.
The proof is given in Appendix~\ref{app:proof-cat-general}.
\end{theorem}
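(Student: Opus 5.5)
The proof is a direct computation that conditions on the true class and then on the observed category pattern, using the posterior formula \eqref{eq:posterior-a}. We begin with the law of total probability over the latent label:
\[
\Pr\bigl(\arg\max_l \hat a_{il}=A_i\bigr)=\sum_{k=1}^K p_k\,\Pr\bigl(\arg\max_l \hat a_{il}=k \mid A_i=k\bigr).
\]
Because $\bx_i^{a}$ takes only finitely many values, each identified with a combination $j^*=(j^*_1,\ldots,j^*_D)$, the inner conditional probability is a finite sum over $j^*$. Each term is the conditional pattern probability $\Pr(X=j^*\mid A_i=k)$ times the indicator that the posterior-mode rule returns $k$ at $j^*$. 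Since the mixture parameters are known, $\hat a_{il}$ is a deterministic function of the pattern, so the indicator is not random given $j^*$.

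Next we identify the two factors.

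\emph{Pattern probability.} By the product-multinomial assumption (conditional independence of the $D$ predictors given $A_i$), we have $\Pr(X=j^*\mid A_i=k)=\prod_{d=1}^D\theta_{k,d,j^*_d}$.

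\emph{Decision indicator.} By \eqref{eq:posterior-a}, all $\hat a_{il}$ share the same denominator $\sum_m p_m f_m(j^*)$. Hence $\arg\max_l \hat a_{il}=k$ holds exactly when $p_k f_k(j^*)>p_l f_l(j^*)$ for every $l\ne k$. Here we use the assumption that the posterior scores are distinct, so ties do not arise and strict inequalities are correct.

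\emph{Rewriting as ratios.} For patterns with $\prod_d\theta_{k,d,j^*_d}>0$, we divide by $p_k f_l(j^*)$ and get the condition $\prod_d \theta_{k,d,j^*_d}/\theta_{l,d,j^*_d}>p_l/p_k$. The conjunction over $l\neq k$ becomes the product of indicators $\prod_{l\ne k}\mathbbm{1}\{\cdot\}$. Summing over $j^*$ and $k$ then gives the displayed formula.

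The only delicate step is the handling of zero probabilities in the ratio form. If $\theta_{k,d,j^*_d}=0$ for some $d$, the weight $\prod_d\theta_{k,d,j^*_d}$ vanishes, so the value of the indicator is irrelevant. If instead $\theta_{l,d,j^*_d}=0$ while the class-$k$ probability is positive, we read the ratio as $+\infty$. This is consistent with $p_k f_k>p_l f_l=0$, given that all $p_k>0$ (and trivially so when $p_k=0$, since that term drops out). I will state this convention explicitly and check that the equivalence between the score comparison and the ratio inequality holds in every case. After that, the proof is a straightforward finite sum.
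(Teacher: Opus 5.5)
Your proposal is correct and follows essentially the same route as the paper's proof: you condition on $A_i=k$, sum the product-multinomial pattern probabilities $\prod_{d}\theta_{k,d,j^*_d}$ over the patterns where the common-denominator comparison $p_k f_k(j^*)>p_l f_l(j^*)$ holds for every $l\ne k$, rewrite that comparison as the ratio condition, and average over the weights $p_k$. Your explicit convention for zero cell probabilities, which the paper leaves implicit, is a small but worthwhile refinement: you read the ratio as $+\infty$ when $\theta_{l,d,j^*_d}=0$, and you note that the indicator is irrelevant when the class-$k$ weight vanishes.
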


In the simplest case, 
\[
X = a Z_1 + (1-a)Z_2,\qquad a \sim \mathrm{Ber}(p),\quad Z_1 \sim \mathrm{Ber}(\theta_1),\quad Z_2 \sim \mathrm{Ber}(\theta_2),
\]
the expression becomes more explicit.

\begin{theorem}[Binary categorical mixture]
\label{thm:categorical-binary}
In the binary case described above,
\[
\Pr\bigl(\arg\max_{l} \hat a_{il} = A_i\bigr)
=
(m - n)\big(p\theta_1 - (1-p)\theta_2\big) + n(2p-1) + (1-p),
\]
where
\[
m = \mathbbm{1}\!\left(\frac{\theta_1}{\theta_2} \ge \frac{1-p}{p}\right),
\qquad
n = \mathbbm{1}\!\left(\frac{1-\theta_1}{1-\theta_2} \ge \frac{1-p}{p}\right),
\]
and $m,n \in \{0,1\}$ are indicator variables (not the sample size).
The proof is given in Appendix~\ref{app:proof-cat-binary}.
\end{theorem}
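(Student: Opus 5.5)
The plan is to compute the accuracy directly by conditioning on the two possible values of the observable $X\in\{0,1\}$. We could specialize Theorem~\ref{thm:categorical-general} with $K=2$, $D=1$, $M_1=2$, but a self-contained computation is cleaner. It also lets us handle ties explicitly, since the statement uses ``$\ge$'' in the definitions of $m$ and $n$, whereas Theorem~\ref{thm:categorical-general} assumes distinct scores.

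\textbf{Labels and posterior-mode rule.} Identify $a=1$ with component~1 (prior $p$, success probability $\theta_1$) and $a=0$ with component~2 (prior $1-p$, success probability $\theta_2$). Then
\[
\Pr(X=1, a=1)=p\theta_1, \qquad \Pr(X=1,a=0)=(1-p)\theta_2,
\]
\[
\Pr(X=0,a=1)=p(1-\theta_1), \qquad \Pr(X=0,a=0)=(1-p)(1-\theta_2).
\]
The posterior-mode rule, with ties broken in favour of class~1, works as follows.
\begin{itemize}
\item At $X=1$ it predicts class~1 iff $p\theta_1\ge(1-p)\theta_2$. This is exactly $m=1$ once the inequality is rewritten as $\theta_1/\theta_2\ge(1-p)/p$.
\item At $X=0$ it predicts class~1 iff $p(1-\theta_1)\ge(1-p)(1-\theta_2)$, i.e.\ $n=1$.
\end{itemize}
To avoid dividing by zero, I will read the indicators in this cross-multiplied form, assuming $0<p<1$ and $\theta_2\in(0,1)$ or interpreting the ratios accordingly.

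\textbf{Summing over the two values of $X$.} The probability of a correct decision is
\[
\Pr(\hat A=A)=\bigl[m\,p\theta_1+(1-m)(1-p)\theta_2\bigr]+\bigl[n\,p(1-\theta_1)+(1-n)(1-p)(1-\theta_2)\bigr].
\]
Expanding, the $m$-terms give $m\bigl(p\theta_1-(1-p)\theta_2\bigr)+(1-p)\theta_2$. The $n$-terms give
\[
n\bigl(p-p\theta_1-(1-p)+(1-p)\theta_2\bigr)+(1-p)(1-\theta_2)
= -n\bigl(p\theta_1-(1-p)\theta_2\bigr)+n(2p-1)+(1-p)(1-\theta_2).
\]
Adding the two pieces, the constants $(1-p)\theta_2+(1-p)(1-\theta_2)$ collapse to $1-p$. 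This yields the stated expression $(m-n)\bigl(p\theta_1-(1-p)\theta_2\bigr)+n(2p-1)+(1-p)$.

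\textbf{Main obstacle.} The algebra is routine. The only point needing care is the bookkeeping of the tie-breaking convention and the degenerate cases $\theta_2\in\{0,1\}$ or $p\in\{0,1\}$, where the ratio form of $m$ and $n$ must be read as the cross-multiplied inequality. I will state this convention explicitly at the start. I will also remark that on ties the accuracy does not depend on which class is chosen, since both options then contribute equal probability mass, so the formula holds regardless of the convention.
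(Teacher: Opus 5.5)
Your proposal is correct and follows essentially the same route as the paper's proof: condition on $X\in\{0,1\}$, identify the posterior-mode decisions with the indicators $m$ and $n$, sum the four $(A,X)$ contributions, and simplify. Two points you add are left implicit in the paper: ties contribute equal mass under either decision, so the formula does not depend on the tie-breaking convention, and $m,n$ are read in cross-multiplied form to handle degenerate parameter values.
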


From this expression, we see that as the component probabilities become more extreme, for example, $|\theta_1 - \theta_2| \to 1$ or $p \to 0$ (or $p \to 1$), we approach a best-case scenario with near-perfect classification. Otherwise, when the two components are not well separated and the prior probabilities are balanced, performance can be no better than random guessing.  
Combined with the identifiability discussion in Section~\ref{subsec:identifiability}, this highlights that the accurate recovery of the sensitive attribute from categorical predictors requires both sufficient component separation and an identifiable mixture model. This logic is summarized schematically in Figure~\ref{fig:flowchart}.

\begin{figure}[h]
    \centering
    \includegraphics[width=0.88\textwidth]{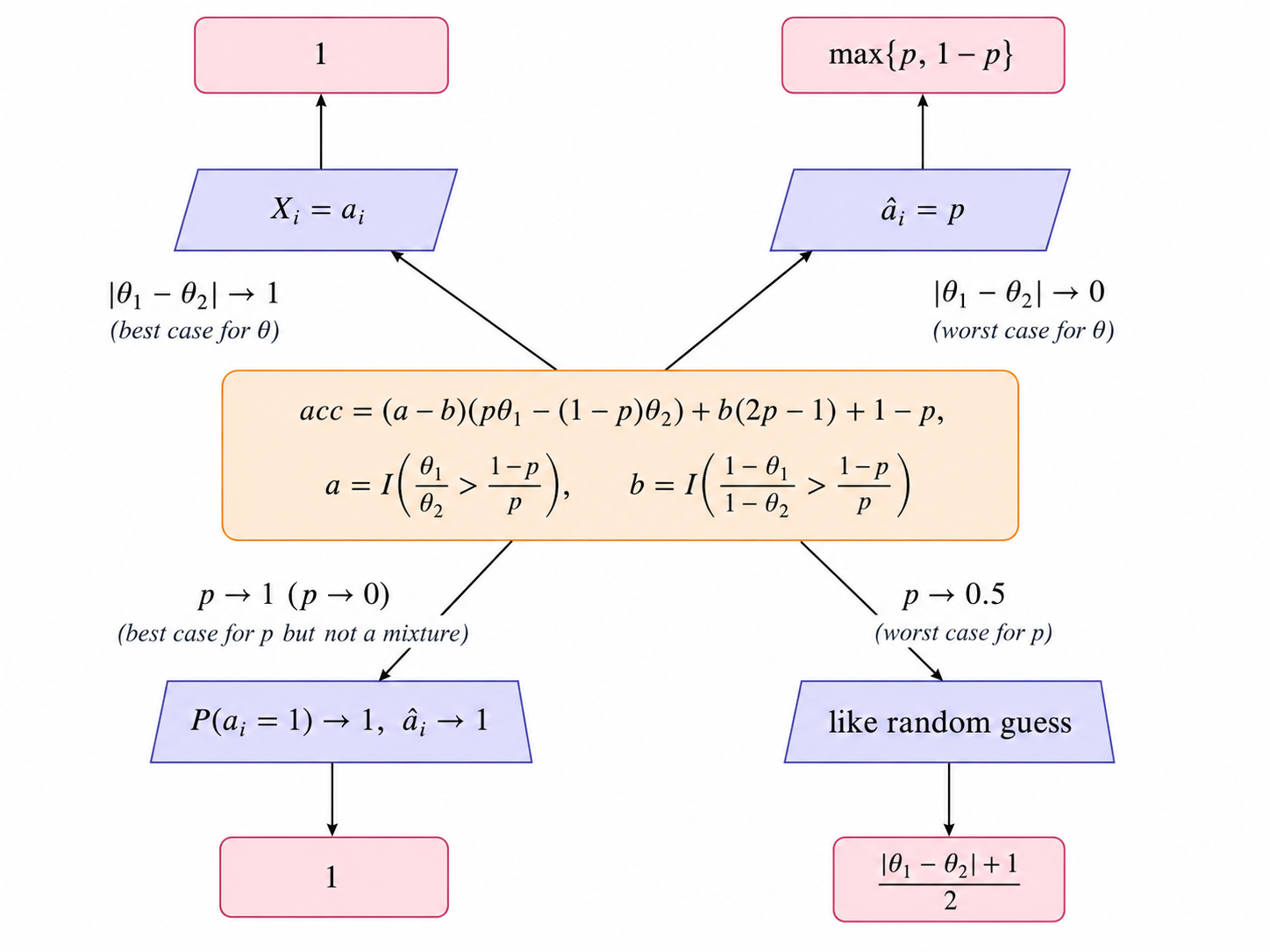}
    \caption{Posterior-mode classification accuracy in the binary categorical mixture and its limiting regimes. The central block gives the exact accuracy formula, the blue blocks summarize intermediate cases, and the pink blocks show the corresponding limiting accuracies.}
    \label{fig:flowchart}
\end{figure}

\subsection{Loss of predictive power and the accuracy--fairness trade-off}
\label{subsec:r2-tradeoff}

We now investigate how enforcing fairness by removing the linear association between the predictors and the sensitive attribute impacts predictive performance.  
Following the procedure described in Section~\ref{subsec:residual}, we regress the predictors on the estimated sensitive attribute and work with the residuals; here, we analyze the resulting loss in $R^2$.

\begin{model}
\label{model:3.1}
For $i = 1,\ldots,n$,
\[
y_i = \beta_0 + \bbeta_a^\top \bx_i^{a} + \bbeta_z^\top \bx_i^{z} + \varepsilon_i,
\]
where $\bbeta_a \in \mathbb{R}^{p_a}$, $\bbeta_z \in \mathbb{R}^{p_z}$,
$\bx_i^{a} \in \mathbb{R}^{p_a}$ denotes the block of predictors related to the sensitive attribute, and $\bx_i^{z} \in \mathbb{R}^{p_z}$ denotes the remaining predictors.  

Assume
\[
[\bx_i^{a} \mid A_i = j] \sim N(\bmu_j,\bSigma_e), \quad j = 1,\ldots,K,
\]
which can be written compactly as
\[
\bx_i^{a} = \bmu^\top \ba_i + \be_i,
\]
with $\bmu \in \mathbb{R}^{K \times p_a}$, $\ba_i\in \{0,1\}^K$ the $K$-dimensional one-hot representation of $A_i$, and
$\be_i \sim N(\mathbf{0},\bSigma_e)$, $\bSigma_e \in \mathbb{R}^{p_a \times p_a}$.  
Assume further that, conditional on $A_i=k$,
\[
\begin{pmatrix}\bx_i^{a}\\[2pt]\bx_i^{z}\end{pmatrix}
\sim
N\!\left(
\begin{pmatrix}\bmu_k\\[2pt]\mathbf{0}\end{pmatrix},
\begin{pmatrix}
\bSigma_e & \bSigma_{12}\\
\bSigma_{12}^\top & \bSigma_{xz}
\end{pmatrix}
\right).
\]
Thus $E(\bx_i^{z}\mid A_i)=\mathbf{0}$, $\Var(\bx_i^{z})=\bSigma_{xz}$, and
\[
\Var(\bx_i^{a})=\bmu^\top\bSigma_A\bmu+\bSigma_e,
\qquad
\Cov(\bx_i^{a},\bx_i^{z})=\bSigma_{12},
\]
where $\bSigma_A=\Var(\ba_i)$. This formulation distinguishes the conditional covariance $\bSigma_e$ from the between-class variation $\bmu^\top\bSigma_A\bmu$ in the marginal covariance of $\bx_i^{a}$.
\end{model}

Let $R_X^2$ denote the $R^2$ when regressing $\by$ on the full predictor vector $\bx = (\bx^{a}, \bx^{z})$, and let $R_U^2$ denote the $R^2$ after the fairness adjustment---i.e., after regressing $\bx$ on $\ba$ and using the residuals $\bu$ as predictors. The following result gives asymptotic expressions for both quantities.

\begin{theorem}[Linear regression case]
\label{thm:R2-general}
Under Model~\ref{model:3.1},
\[
R_U^2
=
\frac{
\bbeta_a^\top \bSigma_e \bbeta_a
+
\bbeta_z^\top \bSigma_{xz} \bbeta_z
+
2\bbeta_a^\top \bSigma_{12} \bbeta_z
}{
\bbeta_a^\top \bmu^\top \bSigma_A \bmu \bbeta_a
+
\bbeta_a^\top \bSigma_e \bbeta_a
+
\bbeta_z^\top \bSigma_{xz} \bbeta_z
+
2\bbeta_a^\top \bSigma_{12} \bbeta_z
+
\sigma_\varepsilon^2
}
+
O_p(n^{-1/2}),
\]
and
\[
R_X^2
=
\frac{
\bbeta_a^\top \bmu^\top \bSigma_A \bmu \bbeta_a
+
\bbeta_a^\top \bSigma_e \bbeta_a
+
\bbeta_z^\top \bSigma_{xz} \bbeta_z
+
2\bbeta_a^\top \bSigma_{12} \bbeta_z
}{
\bbeta_a^\top \bmu^\top \bSigma_A \bmu \bbeta_a
+
\bbeta_a^\top \bSigma_e \bbeta_a
+
\bbeta_z^\top \bSigma_{xz} \bbeta_z
+
2\bbeta_a^\top \bSigma_{12} \bbeta_z
+
\sigma_\varepsilon^2
}
+
O_p(n^{-1/2}).
\]
The proof is given in Appendix~\ref{app:proof-R2-general}.
\end{theorem}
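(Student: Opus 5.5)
The plan is to reduce both sample $R^2$ values to ratios of sample quadratic forms, replace the sample moments by their population counterparts with $O_p(n^{-1/2})$ error, and then evaluate the population ratios under Model~\ref{model:3.1}. Throughout I will treat the regression on $\ba_i$ as using the true one-hot labels (as in the statement of $R_U^2$), and assume finite fourth moments of $(\bx_i,y_i)$, which hold here because the data are Gaussian mixtures. Write $\bSigma_{xx}=\Var(\bx_i)$ and $\bbeta=(\bbeta_a^\top,\bbeta_z^\top)^\top$.

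\textbf{Step 1: $R_X^2$.} For the OLS fit of $\by$ on $(1,\bx)$,
\[
R_X^2=\frac{\bS_{xy}^\top\bS_{xx}^{-1}\bS_{xy}}{S_{yy}}.
\]
By the law of large numbers and the CLT, $\bS_{xx}=\bSigma_{xx}+O_p(n^{-1/2})$, $\bS_{xy}=\bSigma_{xx}\bbeta+O_p(n^{-1/2})$ and $S_{yy}=\Var(y)+O_p(n^{-1/2})$. The map is smooth near the population values (the delta method, with $\bSigma_{xx}$ invertible), so
\[
R_X^2=\frac{\bbeta^\top\bSigma_{xx}\bbeta}{\bbeta^\top\bSigma_{xx}\bbeta+\sigma_\varepsilon^2}+O_p(n^{-1/2}).
\]
The blocks of $\bSigma_{xx}$ are $\Var(\bx^a)=\bmu^\top\bSigma_A\bmu+\bSigma_e$, $\Cov(\bx^a,\bx^z)=\bSigma_{12}$ and $\Var(\bx^z)=\bSigma_{xz}$, all given in Model~\ref{model:3.1}. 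Expanding $\bbeta^\top\bSigma_{xx}\bbeta$ gives exactly the stated numerator and denominator.

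\textbf{Step 2: $R_U^2$.} The population residual of $\bx_i$ on $\ba_i$ is $\bu_i=\bx_i-E(\bx_i\mid A_i)=(\be_i^\top,(\bx_i^z)^\top)^\top$, since $E(\bx^z\mid A)=\mathbf 0$. This residual has covariance
\[
\bSigma_{uu}=\begin{pmatrix}\bSigma_e&\bSigma_{12}\\ \bSigma_{12}^\top&\bSigma_{xz}\end{pmatrix},
\]
and it is uncorrelated with $\ba_i$. Write $y_i=\beta_0+\bbeta_a^\top\bmu^\top\ba_i+\bbeta^\top\bu_i+\varepsilon_i$. Then $\Cov(\bu,y)=\bSigma_{uu}\bbeta$, because the $\ba$-term is uncorrelated with $\bu$, and the population projection of $y$ on $\bu$ explains $\bbeta^\top\bSigma_{uu}\bbeta$. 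The total variance $\Var(y)$ is unchanged, since the response is not residualized. Hence the population value of $R_U^2$ is $\bbeta^\top\bSigma_{uu}\bbeta/\Var(y)$, which matches the stated formula.

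\textbf{Step 3: the estimated residuals.} The sample residuals satisfy $\hat\bu_i=\bu_i-(\hat\bB-\bB)^\top(\ba_i-\bar\ba)-(\bar\bu\text{-type centering})$, where $\hat\bB-\bB=O_p(n^{-1/2})$ by standard OLS theory with $K$ fixed and the group proportions positive. Therefore $\bS_{\hat u\hat u}=\bS_{uu}+O_p(n^{-1/2})$ and $\bS_{\hat u y}=\bS_{uy}+O_p(n^{-1/2})$. Applying the delta method to $\bS_{\hat u y}^\top\bS_{\hat u\hat u}^{-1}\bS_{\hat u y}/S_{yy}$ then gives the $O_p(n^{-1/2})$ rate.

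\textbf{Main obstacle.} The main obstacle is Step 3: controlling the perturbation from the estimated coefficients $\hat\bB$ uniformly, while keeping $\bSigma_{uu}$ invertible so that the inverse is locally Lipschitz. A second point to handle is that if $\hat\bA$ is used in place of $\bA$, the identity $\Cov(\hat\bu,\ba)=0$ only holds approximately. I will state the result for the true labels, or for known-parameter posteriors that converge to them, and note this as an assumption.
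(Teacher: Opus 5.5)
Your proposal is correct and follows essentially the same route as the paper. Both arguments compute the population ratios using $\bx^a=\bmu^\top\ba+\be$, the independence of $(\be,\bx^z)$ from $A$, and the fact that the population residual is $(\be^\top,(\bx^z)^\top)^\top$, then pass to the sample $R^2$ values by root-$n$ consistency and the delta method. Your Step~3 is more careful than the paper, which never addresses the estimated residuals; it is also routine rather than a real obstacle, because with one-hot $\ba_i$ the OLS residuals are just within-group centered values, so $\bS_{\hat u\hat u}=\bS_{uu}+O_p(n^{-1})$.
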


Theorem~\ref{thm:R2-general} is an oracle calculation in which the true sensitive attribute is used for residualization. When $A$ is replaced by $\hat A$, additional error enters through latent-class estimation; quantifying that contribution requires assumptions on mixture separation and parameter-estimation rates and is left for future work.

In the basic case where $x_i^{a}$ is univariate and $A_i$ is binary, these expressions simplify substantially.

\begin{theorem}[Univariate Gaussian case]
\label{thm:R2-univariate}
Consider the simplest version of Model~\ref{model:3.1} in which
\[
[x_i \mid A_i = 1] \sim N(\mu,\sigma_e^2), \qquad
[x_i \mid A_i = 0] \sim N(0,\sigma_e^2), \qquad
A_i \sim \mathrm{Ber}(p),
\]
and $y_i = \beta_0 + \beta_1 x_i + \varepsilon_i$. Then
\[
R_X^2
=
\frac{\beta_1^2 \big( p(1-p)\mu^2 + \sigma_e^2 \big)}
     {\beta_1^2 p(1-p)\mu^2 + \beta_1^2 \sigma_e^2 + \sigma_\varepsilon^2}
     + O_p(n^{-1/2}),
\]
which is the $R^2$ obtained by regressing $y$ on $x$, and
\[
R_U^2
=
\frac{\beta_1^2 \sigma_e^2}
     {\beta_1^2 p(1-p)\mu^2 + \beta_1^2 \sigma_e^2 + \sigma_\varepsilon^2}
     + O_p(n^{-1/2}),
\]
which is the $R^2$ obtained after the fairness adjustment, i.e., after regressing $y$ on the residuals $u$ from regressing $x$ on $A$.  
The proof is given in Appendix~\ref{app:proof-R2-univariate}.
\end{theorem}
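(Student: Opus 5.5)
The plan is to derive Theorem~\ref{thm:R2-univariate} as a direct specialization of Theorem~\ref{thm:R2-general}, and to give a short self-contained verification alongside. In Model~\ref{model:3.1} we take $p_a=1$ and $p_z=0$, so the $\bbeta_z$, $\bSigma_{xz}$ and $\bSigma_{12}$ terms drop out. We also take $K=2$ and encode $A_i\in\{0,1\}$ by the one-hot vector $\ba_i=(A_i,1-A_i)^\top$, with mean matrix $\bmu=(\mu,0)^\top$ and $\bSigma_e=\sigma_e^2$.

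The between-class term then becomes $\beta_1^2\,\bmu^\top\bSigma_A\bmu$. Since $\Var(\ba_i)=p(1-p)\begin{pmatrix}1&-1\\-1&1\end{pmatrix}$, this equals $\beta_1^2 p(1-p)\mu^2$. Substituting into the two displays of Theorem~\ref{thm:R2-general} gives both stated formulas immediately.

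For the direct check, I would compute population quantities and then invoke the law of large numbers and the CLT.
\begin{itemize}
\item[(i)] $\Var(x)=\Var(E[x\mid A])+E\Var(x\mid A)=p(1-p)\mu^2+\sigma_e^2$. Hence $\Var(y)=\beta_1^2(p(1-p)\mu^2+\sigma_e^2)+\sigma_\varepsilon^2$.
\item[(ii)] The population residual of $x$ on $(1,A)$ is $u=x-E[x\mid A]=e$, which has variance $\sigma_e^2$ and is independent of $A$. Moreover, $\Cov(y,u)=\beta_1\sigma_e^2$.
\item[(iii)] The population $R^2$ of $y$ on $u$ is therefore $\Cov(y,u)^2/(\Var(u)\Var(y))=\beta_1^2\sigma_e^2/\Var(y)$. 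The population $R^2$ of $y$ on $x$ is $\beta_1^2\Var(x)/\Var(y)$.
\end{itemize}

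The remaining step, which is the only real obstacle, is to justify the $O_p(n^{-1/2})$ rates. The sample $R^2$ is a smooth function of sample second moments of $(y,x,A)$, namely $R^2=\widehat{\Cov}(y,\hat u)^2/(\widehat{\Var}(\hat u)\widehat{\Var}(y))$. The residual $\hat u_i=x_i-\bar x_{A_i}$ uses estimated group means, and these differ from the true means by $O_p(n^{-1/2})$, provided $0<p<1$ so that both groups have order-$n$ size.

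So the sample moments involving $\hat u$ equal those involving $e_i$ plus $O_p(n^{-1/2})$. All moments are finite under Gaussianity, so each sample moment is $\sqrt n$-consistent by the CLT. The delta method then applies, because the denominators $\Var(u)=\sigma_e^2>0$ and $\Var(y)>0$ are bounded away from zero. This gives the stated $O_p(n^{-1/2})$ error in both expressions.
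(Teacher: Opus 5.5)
Your proposal is correct. Its direct check (i)--(iii) is essentially the paper's own proof. The paper writes $X=\mu A+E$, computes $\Var(X)=p(1-p)\mu^2+\sigma_e^2$, notes $U=X-\E(X\mid A)=E$, and divides the signal variances $\beta_1^2\Var(X)$ and $\beta_1^2\sigma_e^2$ by $\Var(Y)$. It then handles the $O_p(n^{-1/2})$ terms in one sentence about replacing population moments by sample moments.

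You differ from the paper in three places.

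First, the paper never obtains the formulas by specializing Theorem~\ref{thm:R2-general}. Your reduction uses $\ba_i=(A_i,1-A_i)^\top$, $\bmu=(\mu,0)^\top$ and $\bSigma_A=p(1-p)\begin{pmatrix}1&-1\\-1&1\end{pmatrix}$, giving $\bmu^\top\bSigma_A\bmu=p(1-p)\mu^2$. This is correct and shows the two theorems agree, but it only inherits the general proof. The direct computation is what actually carries your argument.

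Second, you compute $R_U^2$ as a squared correlation using $\Cov(y,u)=\beta_1\sigma_e^2$. This makes explicit something the paper uses tacitly when it calls $\Var(\beta_1 U)$ the signal variance: the population coefficient of $y$ on $u$ is $\beta_1$, because $\beta_1\mu A$ is uncorrelated with $u$.

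Third, your rate argument deals with the fact that the fitted residuals are $\hat u_i=e_i-\bar e_{A_i}$ rather than $e_i$. The group-mean errors are $O_p(n^{-1/2})$ when $0<p<1$. The paper's delta-method sentence glosses over this step.

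One small caveat: root-$n$ consistency of $\widehat{\mathrm{Var}}(y)$ requires $\E\varepsilon^4<\infty$. This holds for the Gaussian error the paper assumes in its proof, but the theorem statement does not state it.
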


From these expressions, we observe that the term $p(1-p)\mu^2$ appears in both the numerator and denominator of $R_X^2$, but is absent from $R_U^2$. As $\mu$ increases, corresponding to greater separation between the distributions of $x$ for different values of $A$, this term grows, leading to the following behavior:
\begin{itemize}
    \item As $\mu \to \infty$, the explanatory power of $x$ increases and $R_X^2 \to 1$, indicating that $x$ essentially fully captures the variation in $y$.
    \item In contrast, $R_U^2$ decreases toward zero because, after removing the effect of $A$, the remaining residual variation in $x$ carries little information about $y$.
    \item When $\mu \to 0$, implying no separation between the groups, $R_X^2$ and $R_U^2$ become approximately equal, since $x$ no longer meaningfully encodes the sensitive attribute.
\end{itemize}

This analysis highlights a fundamental accuracy–fairness trade-off: increasing $\mu$ improves the predictive accuracy of models that exploit the sensitive structure in $x$, but once fairness interventions remove the component associated with $A$, the explanatory power of the remaining predictors diminishes, leading to smaller $R_U^2$. This is consistent with the broader literature showing that fairness constraints may induce nontrivial trade-offs with predictive accuracy or calibration \citep{menon2018cost,kleinberg2017tradeoffs,pleiss2017calibration}.
\section{Simulations}
\label{sec:simulations}

\subsection{Sensitive attribute estimation accuracy (Gaussian)}
\label{subsec:sim-gaussian-acc}

We first study the accuracy of estimating the latent sensitive attribute in a Gaussian mixture setting and compare the empirical performance of the EM estimator with the theoretical guarantee in Theorem~\ref{thm:gaussian-uni}.

Let $n = 1000$, $\alpha = 0.05$, and $\sigma = 1$. We consider a univariate Gaussian mixture with $K = 3$ components and component means
\[
\mu_1 = 0, 
\qquad 
\mu_2 = \mu_{\min}, 
\qquad 
\mu_3 = 2\mu_{\min} + 1,
\]
and common variance $\sigma^2$. The class prior probabilities are
\[
\bp = (p_1,p_2,p_3)^\top = (0.2,0.3,0.5)^\top.
\]
For each $i = 1,\dots,n$, we draw $A_i \sim \mathrm{Multi}(1,\bp)$ and then
\[
[x_i^{a} \mid A_i = j] \sim N(\mu_j,\sigma^2),\qquad j=1,2,3.
\]
Denote by $\hat A_i$ the posterior mode classifier constructed from the fitted mixture model and by $\hat a_i$ its corresponding one-hot encoding. According to Theorem~\ref{thm:gaussian-uni} and the separation condition in Eq.~\eqref{eq:delta-condition}, the minimal separation $\mu_{\min}^*$ required to achieve a target error level $\alpha = 0.05$ can be computed from the class priors and noise level, yielding $\mu_{\min}^* \approx 4.33$ in this setting. We vary
\(
\mu_{\min} \in \{0.5,\,1,\,2,\,3,\,\mu_{\min}^*,\,2\mu_{\min}^*\}
\)
and, over 10 replications for each value of $\mu_{\min}$, compute the fraction of correctly estimated sensitive attributes \(
\frac{1}{n}\sum_{i=1}^n \mathbf{1}(\hat A_i = A_i). \)

\begin{figure}[h]
    \centering
    \includegraphics[width=\textwidth]{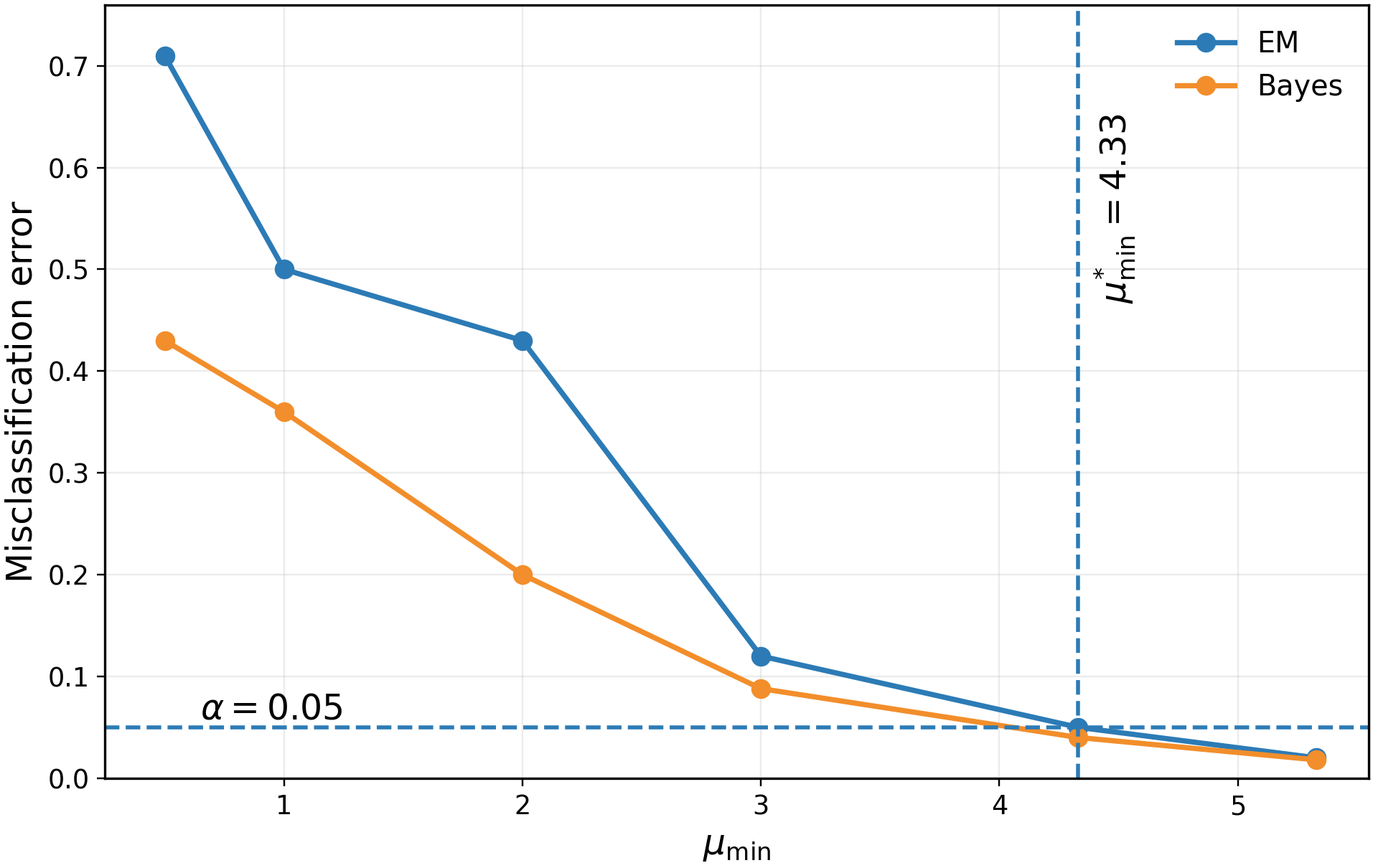}
    \caption{Gaussian mixture model: misclassification error for $\hat A$ versus $\mu_{\min}$.}
    \label{fig:error-rate-mu-min}
\end{figure}

As shown in Figure~\ref{fig:error-rate-mu-min}, the misclassification error is essentially below $0.05$ once $\mu_{\min} \ge \mu_{\min}^* \approx 4.33$, in line with the theoretical prediction of Theorem~\ref{thm:gaussian-uni}.

\subsection{Sensitive attribute estimation accuracy (categorical)}
\label{subsec:sim-cat-acc}

We next consider a product-multinomial mixture with $K=2$ classes, $D=3$ categorical predictors, and $M_d = 2$ categories for each predictor. For each $i = 1,\dots,n$,
\[
A_i \sim \mathrm{Ber}(p),\qquad n = 10000,
\]
and
\begin{align*}
X_{1i} &= A_i Z_{1,1i} + (1-A_i) Z_{2,1i},\\
X_{2i} &= A_i Z_{1,2i} + (1-A_i) Z_{2,2i},\\
X_{3i} &= A_i Z_{1,3i} + (1-A_i) Z_{2,3i},
\end{align*}
where, for $d=1,2,3$,
\[
Z_{1,di} \sim \mathrm{Ber}(\theta_{1,d}),\qquad
Z_{2,di} \sim \mathrm{Ber}(\theta_{2,d}),
\]
independently over $i$ and $d$.

We examine two aspects:

1. \textbf{Effect of class prior $p$}.  
   First, we let the component probabilities be separable, in the sense that
   \[
   \min_{d} |\theta_{1,d} - \theta_{2,d}| \neq 0,
   \]
   and study how the accuracy of estimating $\hat A$ varies with the class prior $p$.  
   Second, we set the component probabilities to be identical,
   \[
   |\theta_{1,d} - \theta_{2,d}| = 0 \quad \text{for all } d,
   \]
   and again examine the dependence of the classification accuracy on $p$.

   Figure~\ref{acc_p} compares the empirical classification accuracy from repeated simulations with the theoretical accuracy given by Theorem~\ref{thm:categorical-general}. 
For each value of $p$, each dot represents the accuracy from one independent replicate (Monte Carlo run), while the red curve shows the corresponding theoretical accuracy.
When the $\theta$'s are not identifiable (right panel), the accuracy is essentially linear in $|p-0.5|$, reflecting the fact that the classifier effectively falls back on the prior.
When the $\theta$'s are identifiable (left panel), the accuracy is no longer linear in $|p-0.5|$: it is minimized at $p=0.5$ and increases toward $1$ as $p\to 0$ or $p\to 1$.

\begin{figure}[h!]
    \centering
    \begin{subfigure}[b]{\textwidth}
        \centering
        \includegraphics[width=\textwidth]{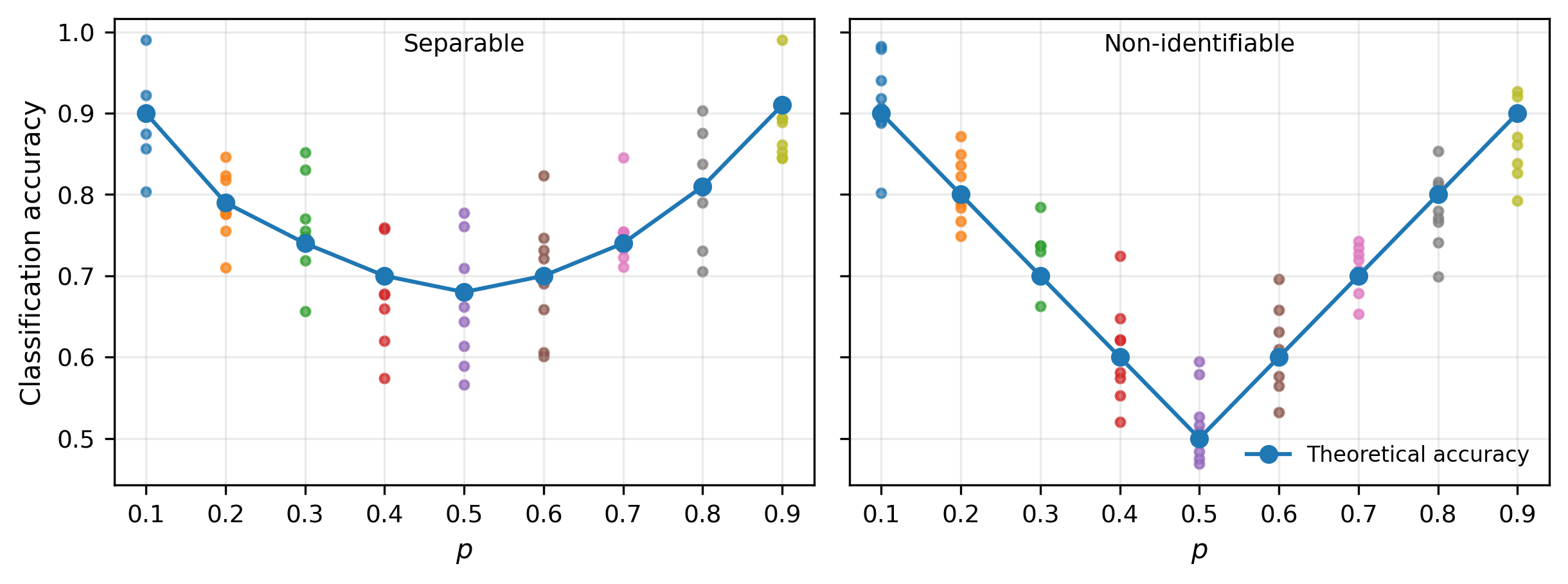}
    \end{subfigure}
    \caption{Categorical mixture model: accuracy of $\hat A$ versus $p$.  
    Left: separable case with $\theta_{1,1}=0.2,\theta_{1,2}=0.6,\theta_{1,3}=0.2$, 
    $\theta_{2,1}=0.4,\theta_{2,2}=0.3,\theta_{2,3}=0.5$.  
    Right: non-identifiable case with $\theta_{1,1}=0.5,\theta_{1,2}=0.6,\theta_{1,3}=0.4$, 
    $\theta_{2,1}=0.5,\theta_{2,2}=0.6,\theta_{2,3}=0.4$.  
    The red line shows the theoretical accuracy from Theorem~\ref{thm:categorical-general}.}
    \label{acc_p}
\end{figure}

2.  \textbf{Effect of component separation}.  
   We then investigate how the accuracy of estimating $\hat A$ varies with the separation between component probabilities when $p$ is fixed. Figure~\ref{acc_theta} shows the accuracy as a function of $|\theta_{2,1}-\theta_{1,1}|$ with $\theta_{1,1}=0.2$, $\theta_{1,2}=0.6$, $\theta_{2,2}=0.3$, $\theta_{1,3}=0.2$, and $\theta_{2,3}=0.5$. As expected, the accuracy increases monotonically with the difference between the component probabilities.

\begin{figure}[h]
    \centering
    \includegraphics[width=10 cm]{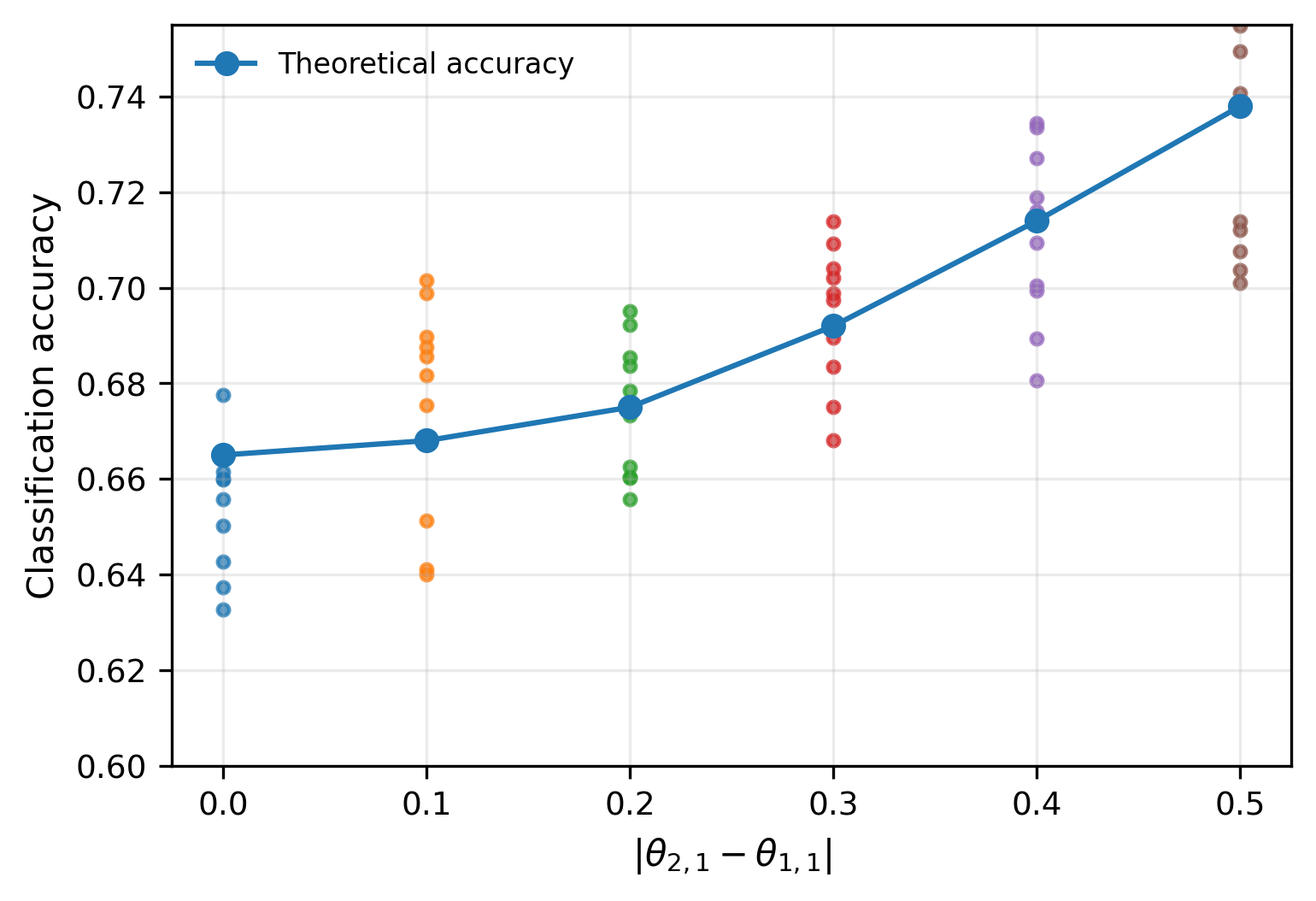}
    \caption{Categorical mixture model: accuracy of $\hat A$ versus $|\theta_{2,1}-\theta_{1,1}|$,  
    with $\theta_{1,1}=0.2,\theta_{1,2}=0.6,\theta_{2,2}=0.3,\theta_{1,3}=0.2,\theta_{2,3}=0.5$.}
    \label{acc_theta}
\end{figure}

\subsection{Adjusted $R^2$ (Gaussian)}
\label{subsec:sim-R2}

We now illustrate the loss of predictive power under the fairness adjustment described in Section~\ref{subsec:r2-tradeoff} and Theorem~\ref{thm:R2-univariate}.

Let $x_i^{a}$ be univariate and generated as
\[
x_i^{a} = \mu A_i + e_i,
\]
where $A_i \sim \mathrm{Ber}(0.7)$, $e_i \sim N(0,\sigma_e^2)$ with $\sigma_e^2 = 2$. Let $\bx_i^{z} \in \mathbb{R}^2$ with independent coordinates drawn from $\mathrm{Unif}(0,1)$, and define the response
\[
y_i = \beta_0 + \beta_1 x_i^{a} + \bbeta_z^\top \bx_i^{z} + \varepsilon_i,
\]
with $\beta_0 = \beta_1 = 1$, $\bbeta_z = (1,1)^\top$, and $\varepsilon_i \sim N(0,\sigma_\varepsilon^2)$ with $\sigma_\varepsilon^2 = 0.5^2$.

For each choice of $\mu$, we compute the empirical $R_U^2$ obtained by regressing $y$ on the residuals from regressing $(x^{a},\bx^{z})$ on $A$. Figure~\ref{fig:R2-vs-n} shows $R_U^2$ as a function of $n$ for 500 Monte Carlo replications. For example, when $\mu = 6$, Theorem~\ref{thm:R2-univariate} predicts a limiting value of $R_U^2 \approx 0.13$, while for $\mu = 10$ the predicted limit is approximately $0.052$. In both cases, the empirical $R_U^2$ converges neatly to its theoretical limit.

\begin{figure}[h]
    \centering
    \includegraphics[width=\textwidth]{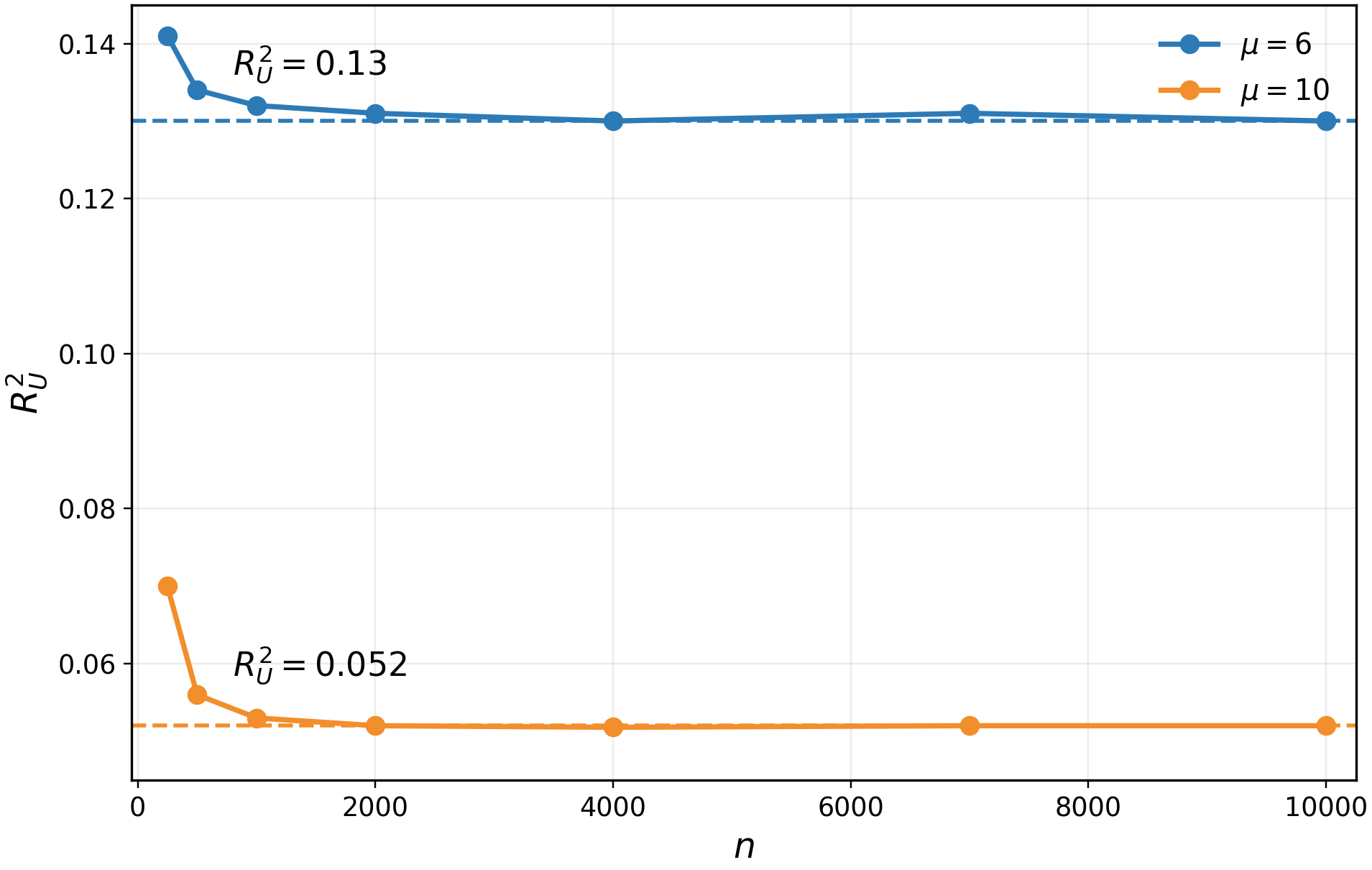}
    \caption{Gaussian mixture regression: $R_U^2$ versus $n$ for different values of $\mu$, averaged over 500 replications.}
    \label{fig:R2-vs-n}
\end{figure}

\subsection{Least-squares regression (Gaussian)}
\label{subsec:sim-ls}

We next consider a higher-dimensional Gaussian mixture regression to study the interaction between fairness constraints and variable selection.

Let $n=100$, and define
\[
\bmu_1 = (0,0)^\top,\qquad
\bmu_2 = (\mu,0)^\top,\qquad
\bmu_3 = (k\mu,0)^\top,
\]
for values of $\mu$ and $k$ specified below. Let $A_i \sim \mathrm{Ber}(p)$ with $p=0.7$. We generate four primary predictors via two bivariate mixtures:
\begin{align*}
(Z_{1i}, Z_{2i})^\top &\sim 
\begin{cases}
N(\bmu_2,\bSigma), & A_i = 1,\\[2pt]
N(\bmu_1,\bSigma), & A_i = 0,
\end{cases}
\\[4pt]
(Z_{3i}, Z_{4i})^\top &\sim 
\begin{cases}
N(\bmu_3,\bSigma), & A_i = 1,\\[2pt]
N(\bmu_1,\bSigma), & A_i = 0,
\end{cases}
\end{align*}
with
\[
\bSigma =
\begin{pmatrix}
1 & 0.5\\
0.5 & 1
\end{pmatrix}.
\]
In addition, we generate $Z_{5i},\dots,Z_{104i} \sim \mathrm{Unif}(-2,2)$ independently and define the response
\[
y_i = Z_{1i} + Z_{2i} + Z_{5i} + Z_{6i} + \varepsilon_i,\qquad
\varepsilon_i \sim N(0,0.5^2).
\]

We compare, over 10 replications, the ratio $\mathrm{SSE}/\mathrm{SST}$ as a function of the fairness tuning parameter $\varepsilon$ in three scenarios:
\begin{itemize}
    \item different separations $\mu \in \{2,6\}$;
    \item different scaling factors $k \in \{1/2, 2\}$;
    \item the ``correct'' model versus a model using SEMMS for variable selection.
\end{itemize}

In this setup, when $\mu=2$ and $k=2$, SEMMS consistently selects $(Z_1,Z_2,Z_5,Z_6)$, and the EM-based screening step correctly identifies $Z_1$ as the relevant mixture-based variable in 7 out of 10 runs.  
When $\mu=6$ and $k=2$, SEMMS selects $Z_3$ (which is absent from the true model) in about half of the runs, and the EM step also tends to choose $Z_3$ because of its larger mean.  
When $\mu=6$ and $k=1/2$, SEMMS again selects $(Z_1,Z_2,Z_5,Z_6)$, and the EM step always chooses $Z_1$ as the relevant variable, since its mean dominates that of $Z_3$.

\begin{figure}[h]
\centering
\includegraphics[width=15 cm]{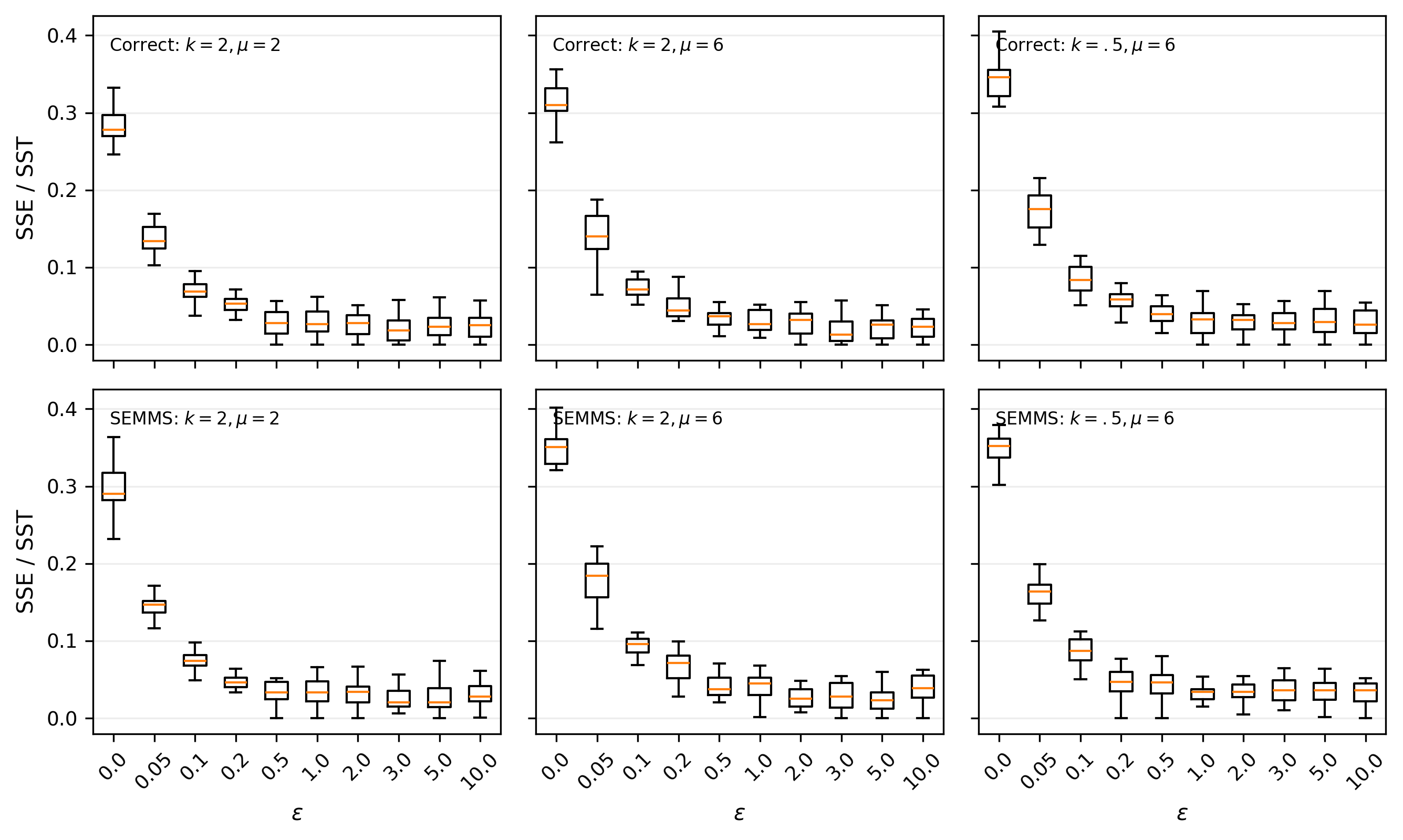}
\caption{Gaussian mixture regression (Model 4): $\mathrm{SSE}/\mathrm{SST}$ versus fairness parameter $\varepsilon$ for different combinations of $(\mu,k)$ and with/without SEMMS variable selection.}
\label{Figure3}
\end{figure}

When $k=2$ and $\mu=2$, the error curves for the correct model and the SEMMS-selected model are quite similar across values of $\varepsilon$. In contrast, when $k=2$ and $\mu=6$, incorporating variable selection leads to a higher error rate due to the spurious inclusion of $Z_3$, which is not part of the true model but has a large mean. For $k=1/2$, the behavior of the correct and SEMMS models is again similar, reflecting the fact that SEMMS correctly avoids selecting $Z_3$ in this regime.

\subsection{Binary classification (Gaussian)}
\label{subsec:sim-bin-gaussian}

We now turn to the binary classification setting under a Gaussian mixture, using the fairness-aware logistic regression formulation in Eq.~\eqref{eq:logit-pen}.

The data-generating mechanism for $(Z_{1},Z_{2},Z_{3},Z_{4})$ and the sensitive attribute $A$ is the same as in Section~\ref{subsec:sim-ls}, except that we now generate $Z_{5},\dots,Z_{100} \sim \mathrm{Unif}(-2,2)$ and define the linear predictor
\[
\eta_i = 1 + Z_{1i} + Z_{2i} + Z_{5i} + Z_{6i} + Z_{7i},
\]
followed by
\[
\pi_i = \frac{1}{1 + \exp(-\eta_i)},\qquad
Y_i \sim \mathrm{Ber}(\pi_i).
\]

We minimize the penalized objective in Eq.~\eqref{eq:logit-pen} using gradient descent with a numerical approximation to the gradient of the fairness penalty. Over 10 replications, we compare:
\begin{itemize}
    \item $\mu \in \{2,6\}$;
    \item $\lambda \in \{0, 0.1, 1, 3, 5, 10\}$;
    \item the correct model versus a model that incorporates SEMMS for variable selection.
\end{itemize}

For each configuration, we record the misclassification error, the mean distance fairness measure $\mathrm{MD}$ (Eq.~\eqref{eq:MD}), and the correlation between predicted probabilities and the estimated sensitive attribute.

\begin{figure}[h]
\centering
\includegraphics[width=15 cm]{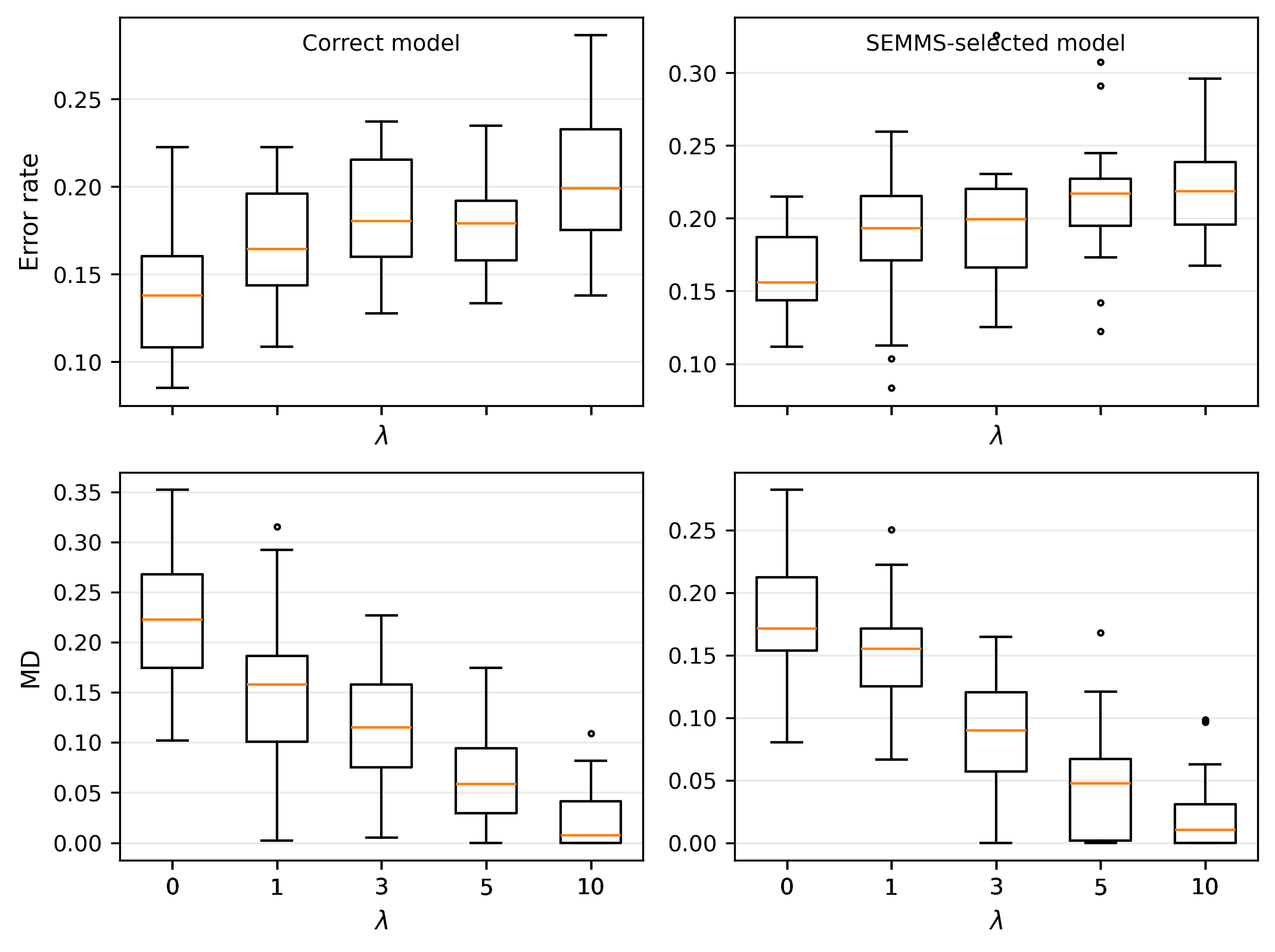}
\caption{Gaussian mixture classification (Model 4): $\mu=2$, error rate and $\mathrm{MD}$ versus $\lambda$, for the correct model and the SEMMS-selected model.}
\label{Figure4}
\end{figure}

\begin{figure}[h]
\centering
\includegraphics[width=15 cm]{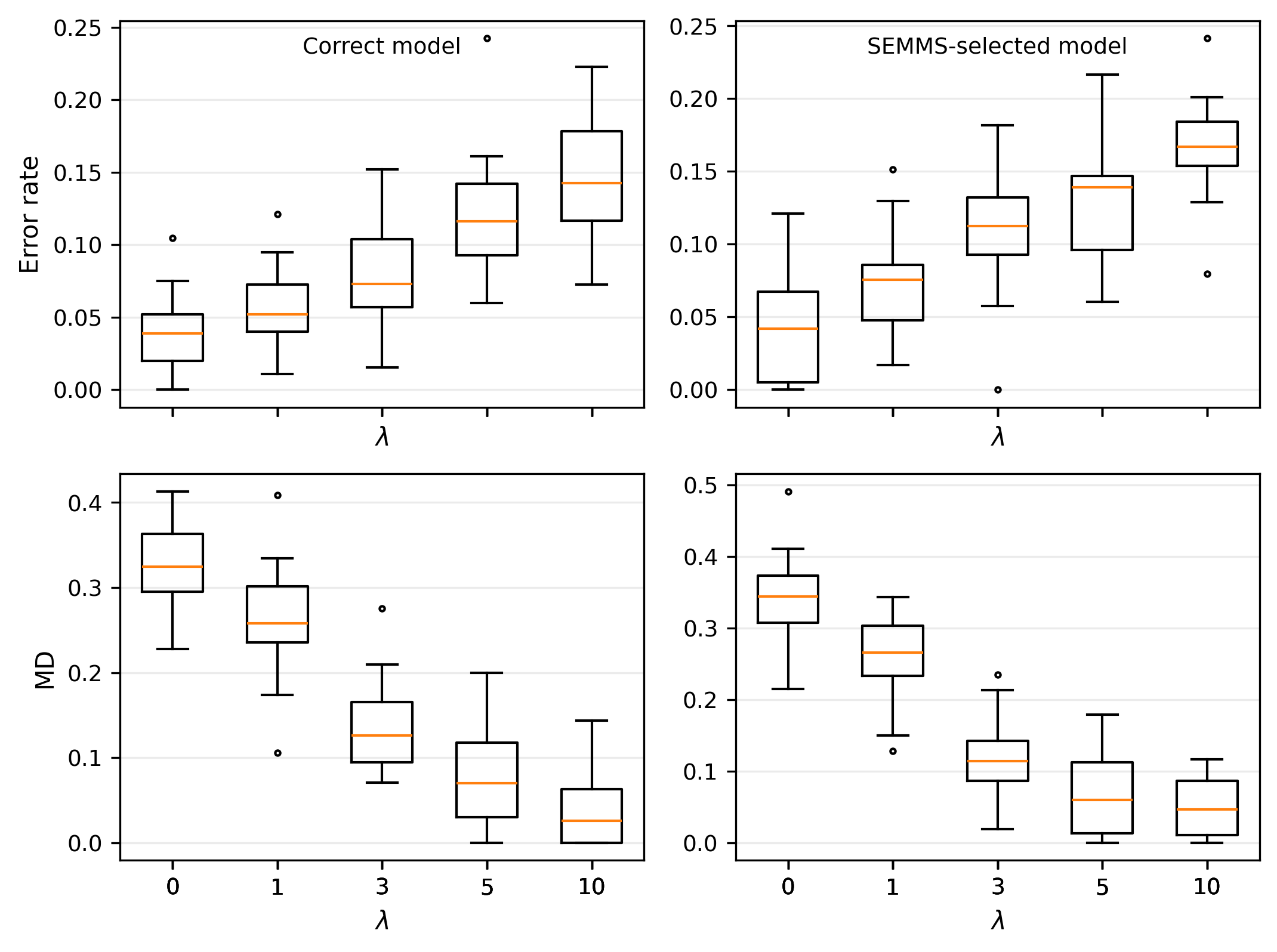}
\caption{Gaussian mixture classification (Model 4): $\mu=6$, error rate and $\mathrm{MD}$ versus $\lambda$, for the correct model and the SEMMS-selected model.}
\label{Figure5}
\end{figure}

In all panels, increasing $\lambda$ leads to a deterioration in the error rate and a decrease in $\mathrm{MD}$, reflecting the trade-off between predictive accuracy and fairness induced by the penalty. This trend is less linear for the model with variable selection, as seen by comparing the left and right columns: SEMMS can either help or hurt, depending on whether it selects mixture-related features that are strongly aligned with $A$.

Comparing $\mu=2$ and $\mu=6$ shows that the latter exhibits stronger fairness effects: when $\mu=6$, the sensitive structure is easier to identify, so increasing $\lambda$ yields a sharper reduction in $\mathrm{MD}$ but also a more pronounced increase in error.

\subsection{Binary classification (categorical)}
\label{subsec:sim-bin-categorical}

Finally, we study binary classification with categorical mixture predictors
using the same product-multinomial structure as in
Section~\ref{subsec:sim-cat-acc}, with $K=2$, $D=3$, and $M_d=2$ for all
$d$. We fix
\[
p=0.5,\qquad n=1000,
\]
and set
\[
\theta_{1,1}=0.1,\qquad \theta_{1,2}=0.9,\qquad
\theta_{2,1}=0.4,\qquad \theta_{2,2}=0.7,
\]
\[
\theta_{1,3}=0.5,\qquad \theta_{2,3}=0.5.
\]
Thus, $X_1$ and $X_2$ carry information about the latent sensitive
attribute, whereas $X_3$ has the same distribution in the two latent groups
and is therefore uninformative about $A$. We consider fairness penalties
\[
\lambda\in\{0,0.6,1.2,1.8,2.4,3.0\}.
\]

For each observation, the binary response is generated according to
\[
Y_i\mid\bX_i\sim\operatorname{Bernoulli}(\pi_i),
\qquad
\pi_i=\frac{\exp(\eta_i)}{1+\exp(\eta_i)},
\]
where $\eta_i$ denotes the outcome linear predictor.

In the first setting, we define
\[
\eta_i=-1+4X_{1i}-X_{2i}-X_{3i},
\]
corresponding to the coefficient vector
\[
\bbeta=(-1,4,-1,-1)^\top,
\]
where the first entry is the intercept. We then generate
\[
Y_i\mid\bX_i\sim
\operatorname{Bernoulli}
\left\{
\frac{\exp(\eta_i)}{1+\exp(\eta_i)}
\right\}
\]
and apply the fairness-aware logistic regression in
Eq.~\eqref{eq:logit-pen}.

Over 20 replications, we record the misclassification error and mean
distance $\mathrm{MD}$ as functions of $\lambda$. As shown in
Figure~\ref{cat_error_MD_1_sim}, increasing $\lambda$ strengthens the
penalty on the dependence between $\hat{\ba}$ and the predicted
probabilities, leading to a higher error rate but a lower $\mathrm{MD}$.
Because the largest coefficient is attached to $X_1$, which is strongly
informative about $A$, the fairness penalty suppresses an important part of
the outcome signal and therefore produces a more substantial loss in
predictive accuracy.

We also observe greater variability in both error and $\mathrm{MD}$ for
larger values of $\lambda$, as the fairness penalty plays an increasingly
dominant role relative to the logistic likelihood. In this setting, the
error rate ranges approximately from $0.10$ to $0.30$, whereas $\mathrm{MD}$
ranges from about $0.5$ to $0.8$.

\begin{figure}[h!]
    \centering
    \begin{subfigure}[b]{0.49\textwidth}
        \centering
        \includegraphics[width=\textwidth]{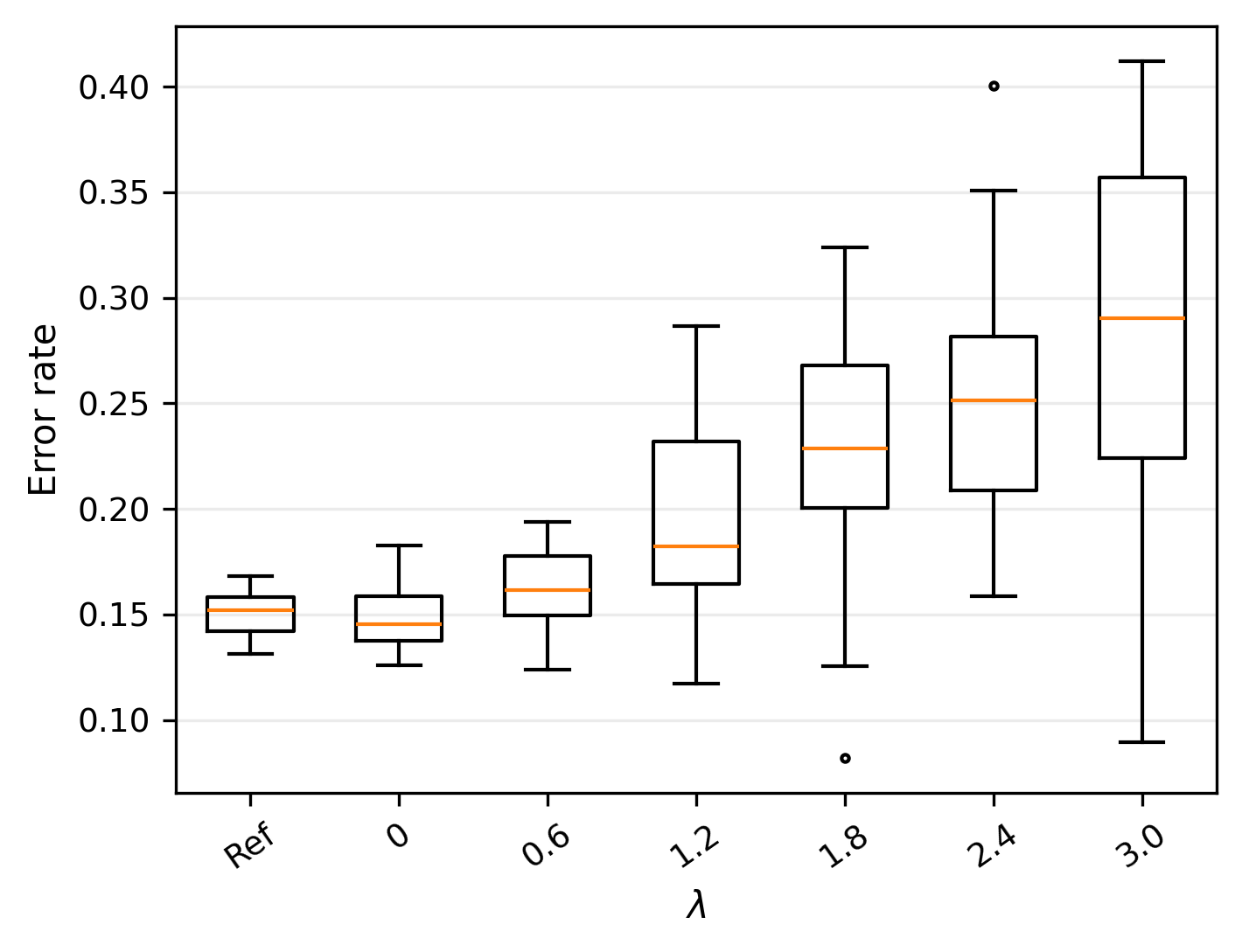}
    \end{subfigure}
    \hfill
    \begin{subfigure}[b]{0.49\textwidth}
        \centering
        \includegraphics[width=\textwidth]{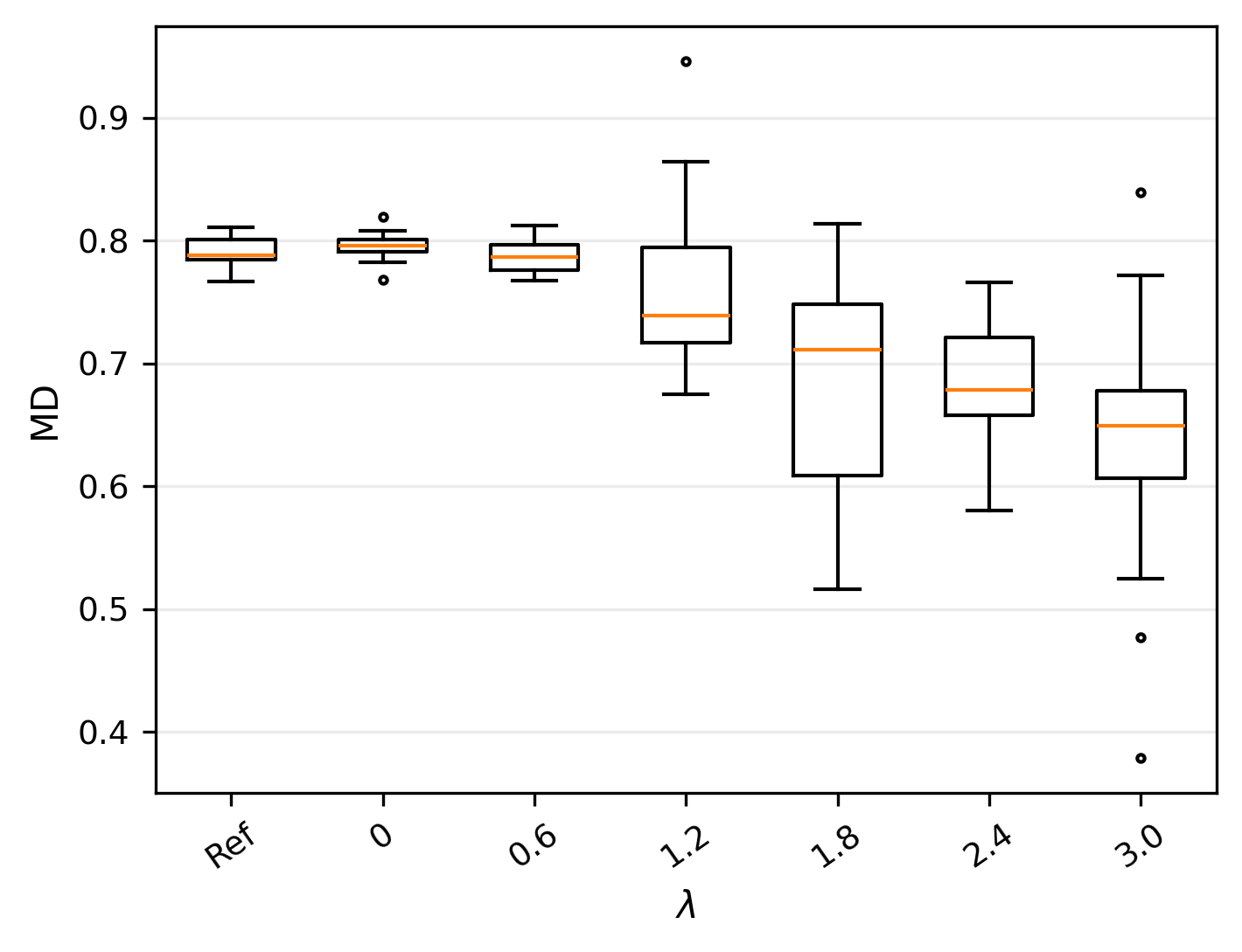}
    \end{subfigure}
    \caption{Categorical mixture classification: error rate and $\mathrm{MD}$
    versus $\lambda$ under the first coefficient configuration,
    $\bbeta=(-1,4,-1,-1)^\top$, where the first entry is the intercept.}
    \label{cat_error_MD_1_sim}
\end{figure}

In the second setting, we assign the largest coefficient to $X_3$, the
predictor that is uninformative about the sensitive attribute. Specifically,
we define
\[
\eta_i=-4+X_{1i}-X_{2i}+4X_{3i},
\]
corresponding to
\[
\bbeta=(-4,1,-1,4)^\top,
\]
and generate
\[
Y_i\mid\bX_i\sim
\operatorname{Bernoulli}
\left\{
\frac{\exp(\eta_i)}{1+\exp(\eta_i)}
\right\}.
\]

The corresponding results are shown in
Figure~\ref{cat_error_MD_2_sim}. The qualitative trade-off between error and
$\mathrm{MD}$ remains as $\lambda$ increases. However, the variability of
the error rate remains relatively small, whereas the variability of
$\mathrm{MD}$ is more pronounced, including for the reference and
unpenalized models.

The error rate remains between approximately $0.18$ and $0.25$, and
$\mathrm{MD}$ does not exceed $0.5$. Because the dominant contribution to
the outcome linear predictor now comes from $X_3$, whose distribution does
not vary with $A$, the fairness penalty can reduce the group-related
component of the predictions without removing most of the predictive
signal. Consequently, stronger fairness penalties can be imposed with a
smaller loss in classification accuracy.

\begin{figure}[h!]
    \centering
    \begin{subfigure}[b]{0.49\textwidth}
        \centering
        \includegraphics[width=\textwidth]{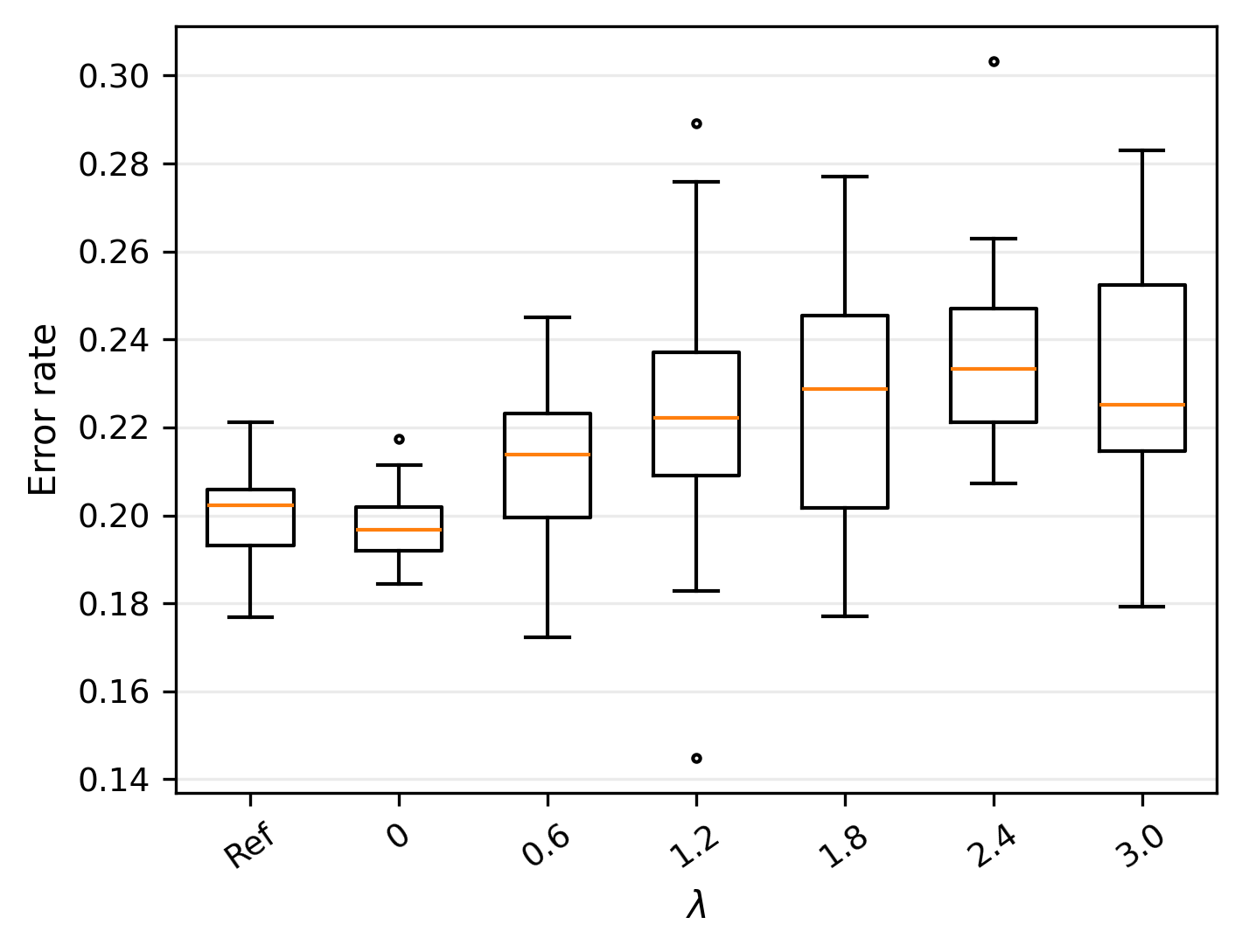}
    \end{subfigure}
    \hfill
    \begin{subfigure}[b]{0.49\textwidth}
        \centering
        \includegraphics[width=\textwidth]{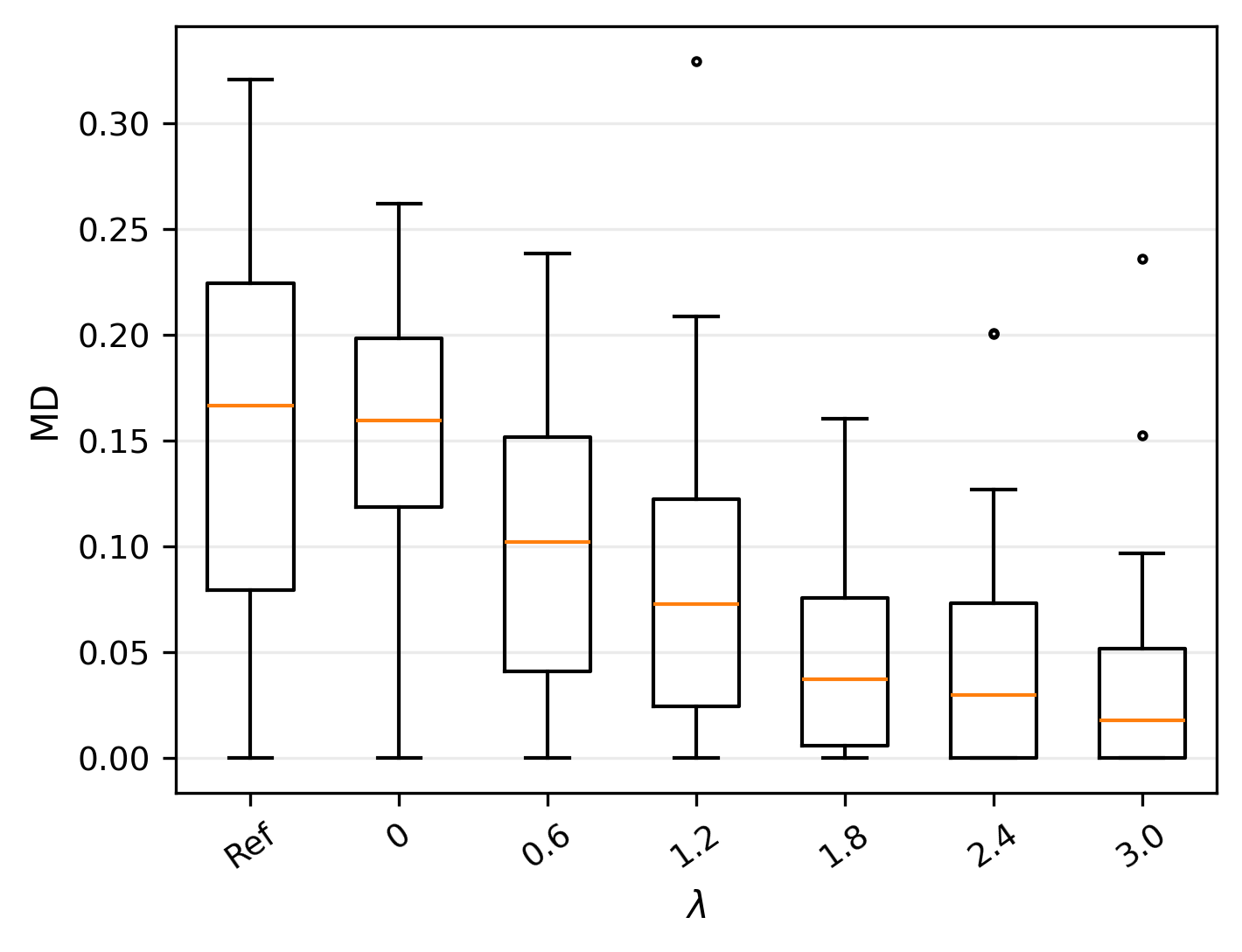}
    \end{subfigure}
    \caption{Categorical mixture classification: error rate and $\mathrm{MD}$
    versus $\lambda$ under the second coefficient configuration,
    $\bbeta=(-4,1,-1,4)^\top$, where the first entry is the intercept.}
    \label{cat_error_MD_2_sim}
\end{figure}
\section{Real Data Applications}
\label{sec:realdata} 

\subsection{Datasets}
\label{subsec:real-datasets}

We evaluate the proposed method on three real datasets: Adult \citep{adult1996}, COMPAS \citep{angwin2016machinebias}, and ARRHYTHMIA \citep{guvenir1997arrhythmia}.

\begin{itemize}
    \item \textbf{Adult} \citep{adult1996}: This dataset contains 45,221 records of personal yearly income, with a binary label indicating whether the yearly salary is over or under \$50K. Gender is considered a sensitive attribute.
    \item \textbf{COMPAS}\citep{angwin2016machinebias}: This dataset assesses the likelihood of recidivism within a future period, containing 11,750 criminal records collected in the U.S. The race of each defendant is treated as a sensitive attribute. 
    \item \textbf{ARRHYTHMIA}\citep{guvenir1997arrhythmia}: This dataset, which requires variable selection, aims to distinguish between the presence and absence of cardiac arrhythmia. It contains 452 instances and 279 attributes, 206 of which are numerical, while the rest are nominal. Gender is a sensitive attribute. We first select variables using SEMMS and then apply our general framework.
\end{itemize}

\subsection{Experimental Setup}
\label{subsec:real-setup}

For the Adult and COMPAS experiments, we follow the experimental setup and related-feature baselines in \citet{zhao2022fairrf}. We optimize the objective using Adam \citep{kingma2015adam}.

For all datasets, we randomly split the data into training and test sets using a 7:3 ratio. Feature screening, mixture fitting, and tuning-parameter selection are performed using the training portion only; the test portion is reserved for final evaluation. Unless otherwise noted, entries are reported as mean $\pm$ standard deviation over five independent random splits.

For the Adult dataset, we select age (numeric), relationship status (categorical),
and marital status (categorical) as relevant variables for estimating $\hat\bA$,
following \citet{zhao2022fairrf}. Since this relevant set contains both continuous and
categorical predictors, we fit a hybrid Gaussian--categorical $K$-component
mixture model as in Section~\ref{subsec:latentA}.

Let $\bx_i^{\,\ba} = (\tilde{\bx}_i,\bx_i') \in \mathbb{R}^{D}\times\mathbb{R}^{M}$,
where $\tilde{\bx}_i$ denotes the categorical part (with $m_d$ levels for the
$d$th predictor) and $\bx_i'$ denotes the continuous part. We assume conditional
independence across predictors and a shared diagonal covariance matrix for the
continuous part, $\bSigma=\diag(\sigma_1^2,\dots,\sigma_M^2)$.
For component $k=1,\dots,K$, the component density factorizes as
\[
f_k(\bx_i^{\,\ba};\btheta_k)
=
\left\{
\prod_{d=1}^{D}\prod_{\ell=1}^{m_d}
\pi_{kd\ell}^{\,z_{id\ell}}
\right\}
\left\{
\prod_{m=1}^{M}\phi(x'_{i,m};\mu_{k,m},\sigma_m^2)
\right\},
\]
where $z_{id\ell}=\mathbb{I}\{\tilde{x}_{i,d}\text{ takes level }\ell\}$,
$\bpi_{kd}=(\pi_{kd1},\dots,\pi_{kd m_d})^\top$ is the probability vector for
the $d$th categorical predictor in component $k$, and $\phi(\cdot;\mu_{k,m},\sigma_m^2)$
is the univariate Gaussian density for the $m$th continuous coordinate. The
parameter $\btheta_k$ collects $\{\bpi_{kd}\}_{d=1}^D$ and $\{\mu_{k,m}\}_{m=1}^M$
(and $\{\sigma_m^2\}_{m=1}^M$ if updated).

Given $(\bp,\btheta_1,\dots,\btheta_K)$, the posterior class probabilities are
\[
a_{ik}=\Pr(A_i=k\mid \bx_i^{\,\ba})
=
\frac{p_k f_k(\bx_i^{\,\ba};\btheta_k)}
{\sum_{\ell=1}^K p_\ell f_\ell(\bx_i^{\,\ba};\btheta_\ell)},
\qquad i=1,\dots,n,\; k=1,\dots,K,
\]
and we estimate them by substituting the EM maximum likelihood estimates, as in
\eqref{eq:posterior-a}--\eqref{eq:em-M-theta}. In particular, the E-step computes
$\hat a_{ik}^{(t+1)}$ as in \eqref{eq:em-E}. In the M-step, because $f_k$ has the
product form above, the updates separate into the categorical and Gaussian parts:
the multinomial updates follow \eqref{eq:em-multinomial-pi} and the Gaussian mean
updates follow the diagonal-covariance specialization of \eqref{eq:em-gaussian-mu}
(Section~\ref{subsubsec: Gaussian est}). Collecting
$\hat\ba_i=(\hat a_{i1},\dots,\hat a_{iK})^\top$ yields the estimated sensitive
attribute matrix $\hat\bA=[\hat\ba_1,\dots,\hat\ba_n]^\top$.

For the COMPAS dataset, to satisfy the necessary counting condition discussed in Section~\ref{subsec:identifiability}, we group race into two levels: \emph{African} and \emph{Non-African}. Unlike \citet{zhao2022fairrf}, we observe that the decile score (10 levels) and the score text (3 levels) convey essentially equivalent information. Including both would fail this screening rule and make generic identifiability doubtful, so we retain only the decile score (10 levels) together with age (a 3-level factor), crime rate, and sex as relevant predictors to estimate \(\hat{\ba}\).

Once \(\hat{\ba}\) has been estimated for Adult and COMPAS, we construct residualized predictors \(\hat{\bU}\) as in Section~\ref{subsec:residual} and fit the fairness-aware logistic regression model in Eq.~\eqref{eq:logit-pen}, optimizing it via Adam.

For the ARRHYTHMIA dataset, SEMMS selects the 5th, 91st, 179th, 199th, 224th, and 279th predictors. These correspond to QRS duration-related measurements from different leads/channels: V1, S' wave, AVL, DII, AVR, and V6.\footnote{The QRS duration is the time interval between the start of the Q wave and the end of the S wave on an electrocardiogram (ECG), reflecting the duration of ventricular depolarization. It is typically measured in milliseconds, with a normal range of 80–120 ms. Prolonged QRS duration can indicate conduction abnormalities in the ventricles.}  
All selected predictors are numeric, so we fit a Gaussian mixture to this reduced set and apply the EM algorithm as in Section~\ref{subsec:latentA}. The screening procedure identifies the 199th variable as the most relevant for estimating \(\hat{\ba}\).

\subsection{Experimental Results}
\label{subsec:real-results}

\subsubsection{Accuracy of predicting \texorpdfstring{$\ba$}{a}}

We first assess the quality of sensitive attribute estimation by comparing the AUC achieved by our EM-based approach to that of FairWS and FairWS+MI \citep{zhu2022learning}, which are based on a probabilistic generative framework. As summarized in Table~\ref{table1-a}, our method attains an AUC of 0.70 on Adult, which is in the same general range as the FairWS baselines, although it is lower than their reported AUCs. For COMPAS, the AUC is about 0.62, and for the more complex ARRHYTHMIA dataset, the AUC is approximately 0.57. Overall, these results indicate that the latent sensitive attribute can be recovered with useful, though imperfect, accuracy, which is sufficient to support the downstream fairness-adjustment step.

\begin{table}[H]
\begin{center}
 \begin{tabular}{c|ccc} 
 \hline
  & FairWS  & FairWS + MI & EM algorithm \\
 \hline
 Adult       & 0.76 & 0.77 & 0.70 \\ 
 COMPAS      &  0.57    &  0.59    & 0.62 \\
 ARRHYTHMIA  &  0.63    &    0.63  & 0.57 \\
 \hline
\end{tabular}
\caption{AUC for estimating the sensitive attribute \(\hat{\ba}\) on three benchmark datasets.}
\label{table1-a}
\end{center}
\end{table}

\subsubsection{Accuracy and fairness of predicting \texorpdfstring{$\by$}{y}}

We next compare our approach against several baselines, including the unconstrained model (Vanilla), the oracle fairness-constrained model using the true sensitive attribute (ConstrainS), adversarially reweighted learning (ARL) \citep{lahoti2020arl}, fair class balancing via synthetic oversampling (KSMOTE) \citep{yan2020faircb}, the related-feature removal baseline (RemoveR), and FairRF \citep{zhao2022fairrf}. Following \citet{zhao2022fairrf}, ConstrainS and RemoveR are included as oracle and feature-removal baselines, respectively.

For each fitted classifier, we evaluate predictive performance using classification accuracy (ACC), and fairness using gap-based metrics related to equality of opportunity and demographic parity \citep{hardt2016equality,feldman2015disparate}. Specifically, we report:
\begin{itemize}
    \item \textbf{ACC}: classification accuracy,
    \item \(\Delta_{\mathrm{EO}}\): the maximum absolute difference in true positive and false positive rates across groups (equalized odds gap),
    \item \(\Delta_{\mathrm{DP}}/\mathrm{MD}\): the demographic parity gap and, in our case, the mean distance (\(\mathrm{MD}\), defined in Eq.~\eqref{eq:MD}) as a proxy for group-wise performance disparity.
\end{itemize}

Table~\ref{tab:adult_comparison} shows the results for the Adult dataset. With a suitable choice of \(\lambda\), our method reduces the fairness gap to near zero (about a 50\% improvement in \(\Delta_{\mathrm{DP}}/\mathrm{MD}\)) while incurring only about a 1\% loss in accuracy relative to the best-performing baselines.

\begin{table}[h!]
    \centering
    \caption{Predictive accuracy and fairness on Adult. Higher ACC is better, whereas lower gaps are better. Boldface marks the best value in each column; ties are all bolded.}
    \label{tab:adult_comparison}
    \begin{tabular}{lccc}
        \hline
        \textbf{Method} & \textbf{ACC} & $\boldsymbol{\Delta}_{\mathbf{EO}}$ & $\boldsymbol{\Delta}_{\mathbf{DP}}/\mathbf{MD}$ \\
        \hline
        Vanilla                    & 0.856 $\pm$ 0.001 & 0.046 $\pm$ 0.006 & 0.089 $\pm$ 0.005 \\
        ConstrainS                 & 0.845 $\pm$ 0.002 & 0.040 $\pm$ 0.004 & \textbf{0.058 $\pm$ 0.003} \\
        ARL                        & \textbf{0.861 $\pm$ 0.003} & 0.034 $\pm$ 0.012 & 0.141 $\pm$ 0.008 \\
        KSMOTE                     & 0.560 $\pm$ 0.002 & 0.141 $\pm$ 0.031 & 0.120 $\pm$ 0.022 \\
        RemoveR                    & 0.801 $\pm$ 0.010 & 0.124 $\pm$ 0.004 & 0.071 $\pm$ 0.002 \\
        FairRF \citep{zhao2022fairrf} & 0.832 $\pm$ 0.001 & \textbf{0.025 $\pm$ 0.009} & 0.066 $\pm$ 0.004 \\
        FairRF (replication)       & 0.846 $\pm$ 0.009 & 0.041 $\pm$ 0.004 & 0.075 $\pm$ 0.006 \\
        Proposed ($\lambda=0.3$) & 0.837 $\pm$ 0.039 & \textbf{0.025 $\pm$ 0.007} & \textbf{0.058 $\pm$ 0.002} \\
        \hline
    \end{tabular}
\end{table}

For COMPAS (Table~\ref{tab:compas_comparison}), the proposed method attains the best equalized-odds gap (tied with our FairRF replication) and a demographic-parity gap close to the oracle ConstrainS baseline, while retaining accuracy within 0.003 of the best-performing method.

\begin{table}[h!]
\centering
\caption{Predictive accuracy and fairness on COMPAS. Higher ACC is better, whereas lower gaps are better. Boldface marks the best value in each column; ties are all bolded.}
\label{tab:compas_comparison}
\begin{tabular}{lccc}
\hline
\textbf{Method} & \textbf{ACC} & $\boldsymbol{\Delta}_{\mathbf{EO}}$ & $\boldsymbol{\Delta}_{\mathbf{DP}}/\mathbf{MD}$ \\
\hline
Vanilla                    & \textbf{0.681 $\pm$ 0.004} & 0.242 $\pm$ 0.021 & 0.171 $\pm$ 0.015 \\
ConstrainS                 & 0.674 $\pm$ 0.002 & 0.154 $\pm$ 0.032 & \textbf{0.122 $\pm$ 0.031} \\
ARL                        & 0.672 $\pm$ 0.023 & 0.197 $\pm$ 0.042 & 0.286 $\pm$ 0.033 \\
KSMOTE                     & 0.601 $\pm$ 0.021 & 0.203 $\pm$ 0.042 & 0.151 $\pm$ 0.023 \\
RemoveR                    & 0.595 $\pm$ 0.024 & 0.205 $\pm$ 0.049 & 0.185 $\pm$ 0.024 \\
FairRF \citep{zhao2022fairrf} & 0.661 $\pm$ 0.009 & 0.166 $\pm$ 0.022 & 0.143 $\pm$ 0.021 \\
FairRF (replication)       & 0.676 $\pm$ 0.014 & \textbf{0.130 $\pm$ 0.013} & 0.135 $\pm$ 0.007 \\
Proposed ($\lambda=0.3$) & 0.678 $\pm$ 0.010 & \textbf{0.130 $\pm$ 0.017} & 0.132 $\pm$ 0.008 \\
\hline
\end{tabular}
\end{table}     

\subsubsection{Accuracy–fairness trade-off}
\label{subsec:real-tradeoff}

Finally, we visualize the trade-off between classification error and mean distance as the fairness penalty \(\lambda\) varies in Eq.~\eqref{eq:logit-pen}. As \(\lambda\) increases, the penalty on dependence between \(\hat{\bA}\) and the predictions becomes stronger, leading to more similar group-wise performance (lower \(\mathrm{MD}\)) but also higher classification error.

\begin{figure}[h!]
    \centering
    \begin{subfigure}[b]{0.49\textwidth}
        \centering
        \includegraphics[width=\textwidth]{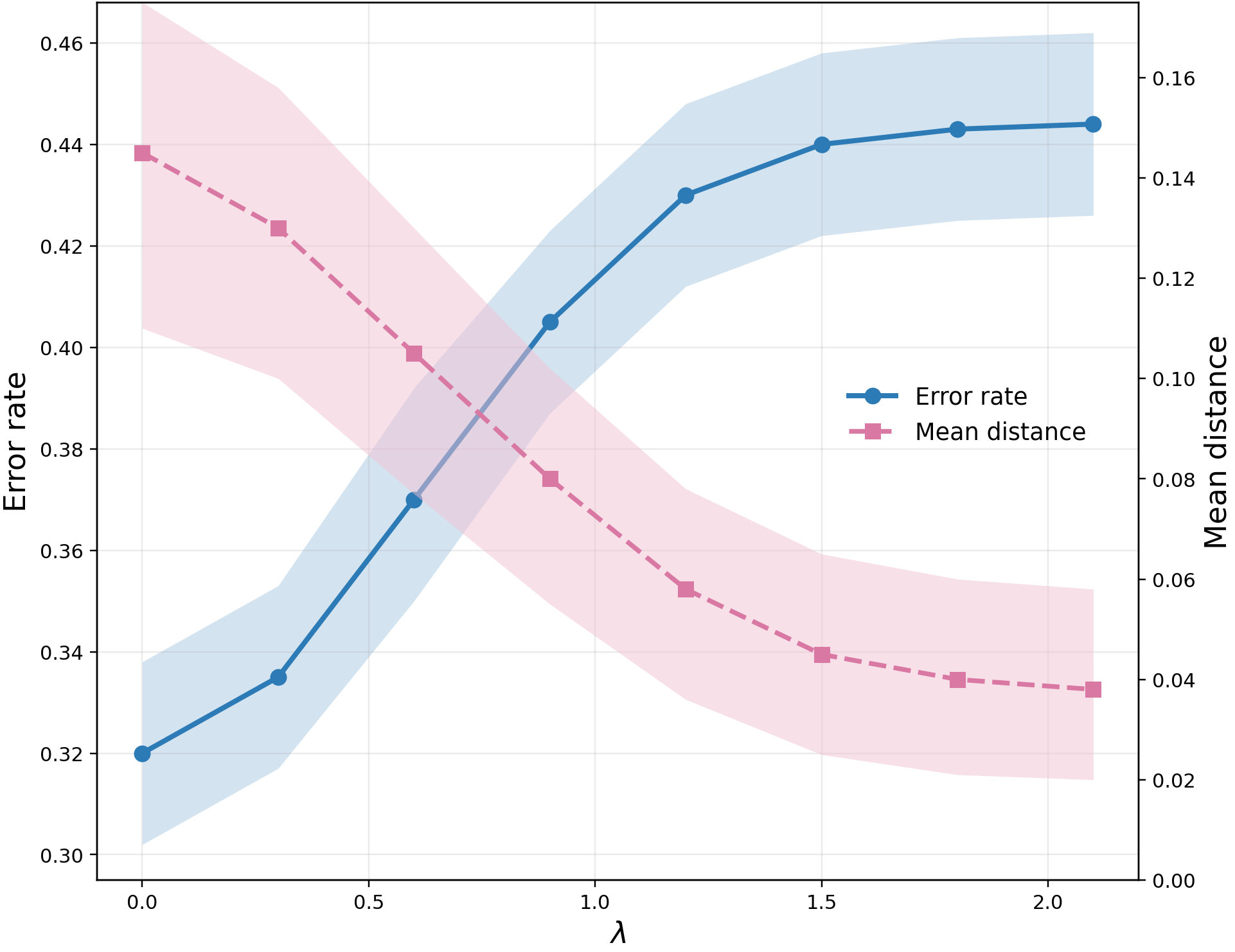}
    \end{subfigure}
    \hfill
    \begin{subfigure}[b]{0.49\textwidth}
        \centering
        \includegraphics[width=\textwidth]{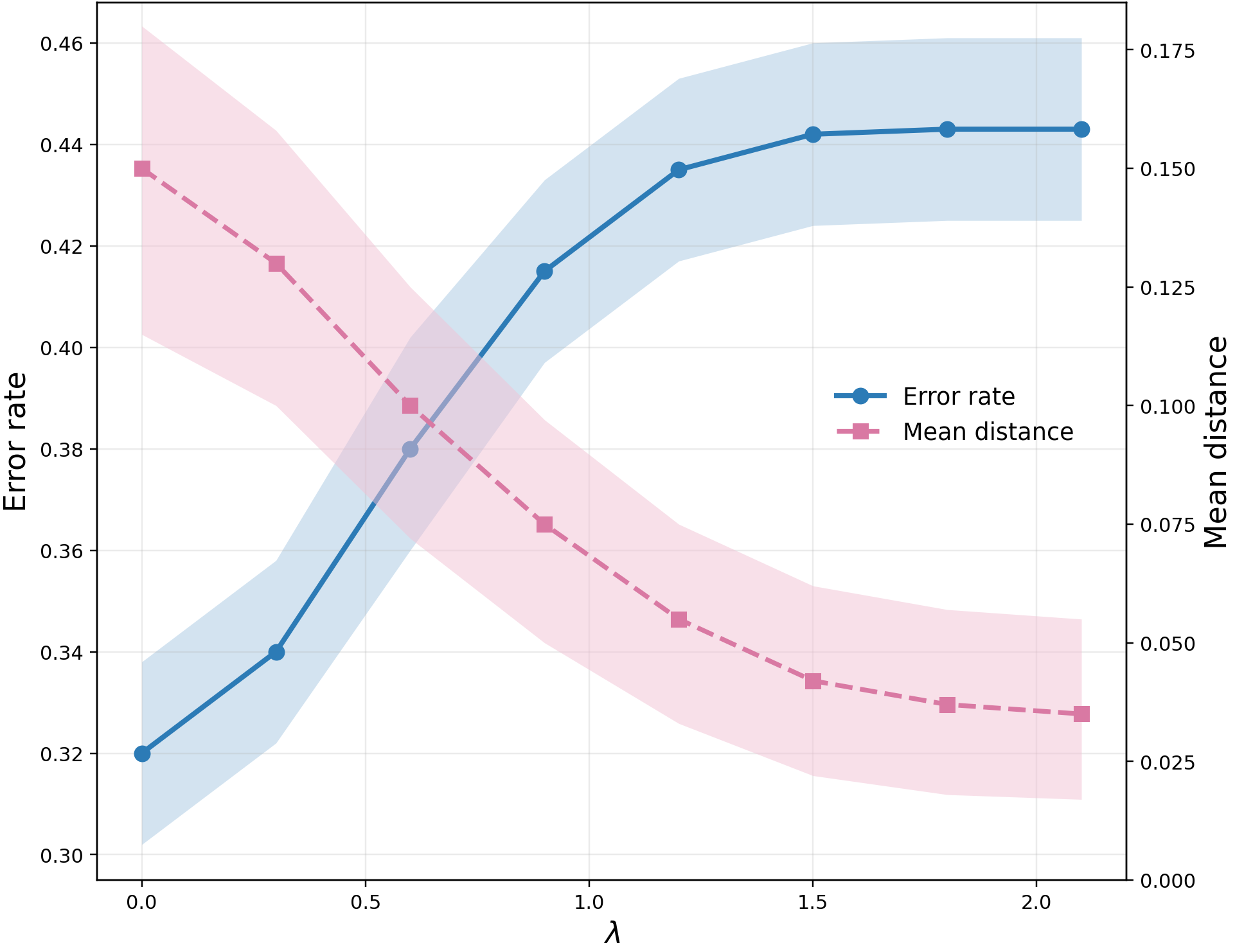}
    \end{subfigure}
    \caption{Error rate and mean distance versus $\lambda$: left, Adult; right, COMPAS.}
    \label{adult_lambda}
\end{figure}

For ARRHYTHMIA, we observe a similar pattern: increasing \(\lambda\) moves the classifier along an explicit accuracy--fairness trade-off curve, with \(\mathrm{MD}\) decreasing and the error rate gradually increasing.

\begin{figure}[h!]
    \centering
    \begin{subfigure}[b]{0.52\textwidth}
        \centering
        \includegraphics[width=\textwidth]{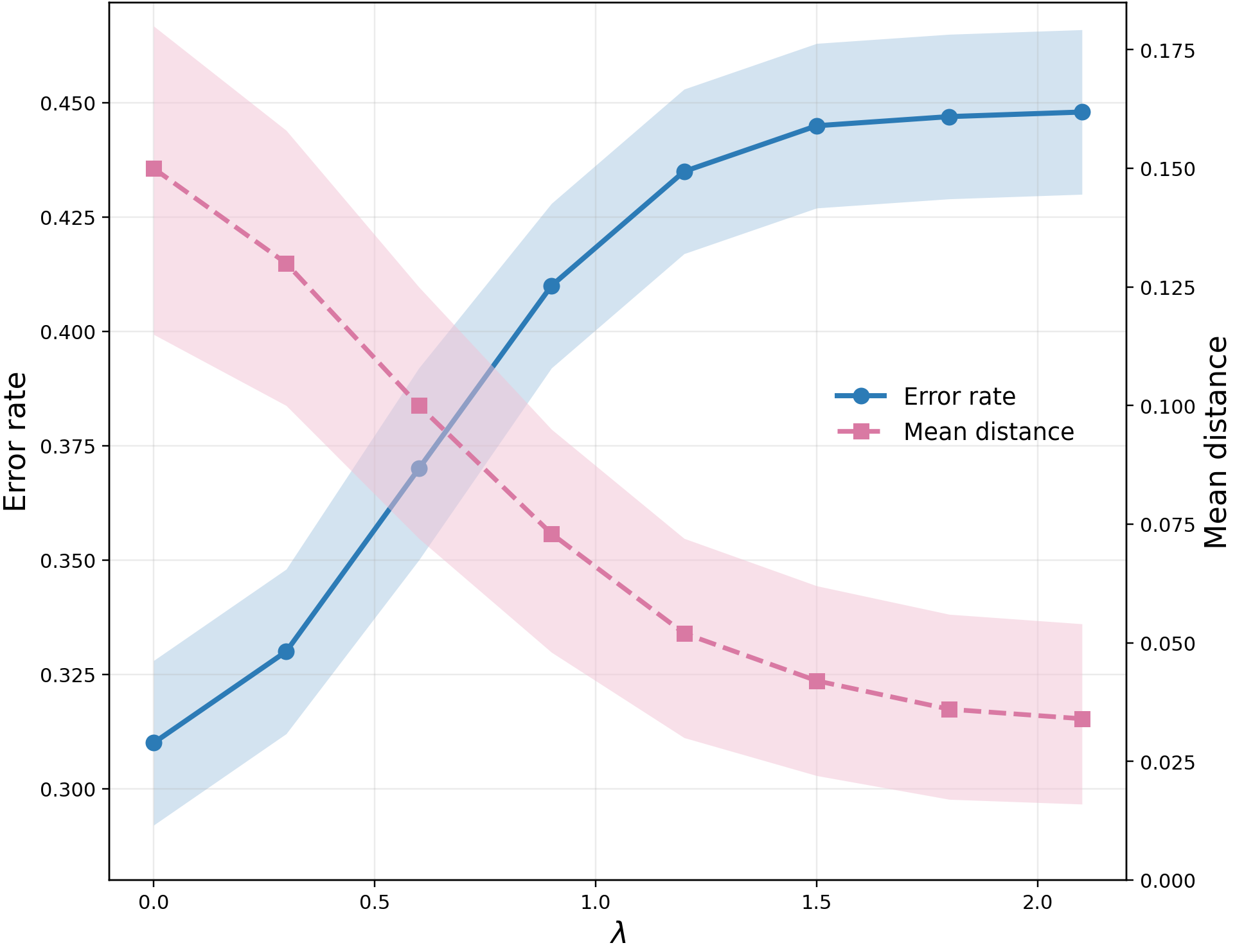}
        \caption{Error rate and mean distance versus $\lambda$ for ARRHYTHMIA.}
    \end{subfigure}
\end{figure}

\section{Conclusion}
\label{sec:conclusion}

We introduced a framework for fairness-aware generalized linear modeling when sensitive attributes are unavailable and the predictor dimension may be large. The procedure estimates latent group membership through finite mixture models, residualizes predictors with respect to the resulting posterior probabilities, and controls dependence between predictions and the estimated sensitive structure. Continuous outcomes are handled through an $R^2$ constraint, binary outcomes through a correlation-based penalty, and high-dimensional predictors through integration with SEMMS variable selection.

The theoretical analysis clarifies both the possibilities and the limitations of this strategy. Parameter counting provides only a necessary screening condition for categorical-mixture identifiability, while stronger generic identifiability conditions are needed for formal recovery. Conditional on identifiable and sufficiently separated components, the posterior-mode classifier recovers the latent groups with increasing accuracy. The regression calculations further quantify the predictive information lost when group-associated variation is removed, making the accuracy--fairness trade-off explicit.

The simulations agree with these results, and the Adult, COMPAS, and Arrhythmia applications show that the proposed procedure can reduce group disparities while retaining useful predictive performance. These findings should nevertheless be interpreted with care: fairness is assessed relative to an estimated sensitive attribute, so estimation error, model misspecification, and weak component separation may all limit the protection achieved for the true unobserved groups.

Future work should develop uncertainty-aware fairness constraints that propagate error in the latent-group estimates, establish guarantees under mixture misspecification, and extend the framework to richer mixed-data and nonlinear models. A further priority is to study how proxy selection and fairness tuning can be performed with valid post-selection inference and robust out-of-sample guarantees.

\appendix

\section{Proofs}
\label{app:proofs-theory}

\subsection{Gaussian mixture classification}

\subsubsection{Proof of Theorem~\ref{thm:gaussian-multi}}
\label{app:proof-gaussian-multi}
\begin{proof}
For an observed vector $x$, Bayes' rule gives the posterior probability of class $k$ as
\[
\Pr(A=k\mid X=x)
\propto
p_k\,\phi_{p_a}(x;\bmu_k,\bSigma_e).
\]
Therefore the posterior-mode classifier assigns $x$ to the class maximizing
\[
\log p_k-\tfrac12(x-\bmu_k)^\top\bSigma_e^{-1}(x-\bmu_k),
\]
which is exactly the definition of the decision region $\mathcal D_k$. Conditioning on the true class and integrating over the corresponding decision region yields
\[
\Pr(\hat A=A)
=
\sum_{k=1}^K p_k\Pr_k(X\in\mathcal D_k)
=
\sum_{k=1}^K p_k\int_{\mathcal D_k}\phi_{p_a}(x;\bmu_k,\bSigma_e)\,dx.
\]
This proves the exact expression.

It remains to show that the accuracy tends to one as $\delta_{\min}\to\infty$. Fix two distinct classes $k$ and $\ell$, and define the log-posterior difference
\[
D_{k\ell}(X)
=
\log\frac{p_k\phi_{p_a}(X;\bmu_k,\bSigma_e)}{p_\ell\phi_{p_a}(X;\bmu_\ell,\bSigma_e)}.
\]
Under the event $A=k$, we may write $X=\bmu_k+E$ with $E\sim N(0,\bSigma_e)$. Substituting this into $D_{k\ell}(X)$ and simplifying gives
\[
D_{k\ell}(X)
=
\log\frac{p_k}{p_\ell}
+\frac12(\bmu_k-\bmu_\ell)^\top\bSigma_e^{-1}(\bmu_k-\bmu_\ell)
+(\bmu_k-\bmu_\ell)^\top\bSigma_e^{-1}E.
\]
Hence, conditional on $A=k$,
\[
D_{k\ell}(X)\sim N\!\left(\log\frac{p_k}{p_\ell}+\frac{\delta_{k\ell}^2}{2},\;\delta_{k\ell}^2\right).
\]
Class $\ell$ defeats class $k$ only when $D_{k\ell}(X)\le 0$, so
\[
\Pr_k(\ell\text{ defeats }k)
=
\Phi\!\left(\frac{\log(p_\ell/p_k)}{\delta_{k\ell}}-\frac{\delta_{k\ell}}{2}\right).
\]
Because each class probability is positive, $\log(p_\ell/p_k)$ is finite. As $\delta_{k\ell}\to\infty$, the argument of $\Phi$ tends to $-\infty$, and therefore
\[
\Pr_k(\ell\text{ defeats }k)\to 0.
\]
Now
\[
\Pr_k(\hat A\neq k)
=
\Pr_k\Bigl(\bigcup_{\ell\ne k}\{\ell\text{ defeats }k\}\Bigr)
\le
\sum_{\ell\ne k}\Pr_k(\ell\text{ defeats }k),
\]
so if $\delta_{\min}\to\infty$, the right-hand side tends to zero uniformly in $k$. Thus $\Pr_k(\hat A=k)\to 1$ for every $k$, and averaging with respect to $p_k$ yields
\[
\Pr(\hat A=A)\to 1.
\]
\end{proof}

\subsubsection{Proof of Theorem~\ref{thm:gaussian-uni}}
\label{app:proof-gaussian-uni}
\begin{proof}
The exact formula is a one-dimensional specialization of the previous argument. Indeed, for each class $k$, the posterior-mode rule assigns $x$ to class $k$ exactly when $x\in\mathcal I_k$, where
\[
\mathcal I_k
=
\left\{x:
\log p_k-\frac{(x-\mu_k)^2}{2\sigma_e^2}
\ge
\log p_\ell-\frac{(x-\mu_\ell)^2}{2\sigma_e^2}
\ \text{for every }\ell\ne k
\right\}.
\]
Therefore
\[
\Pr(\hat A=A)
=
\sum_{k=1}^K p_k\Pr_k(X\in\mathcal I_k)
=
\sum_{k=1}^K p_k\int_{\mathcal I_k}
\frac{1}{\sigma_e}\phi\!\left(\frac{x-\mu_k}{\sigma_e}\right)dx.
\]
This proves the exact accuracy formula.

We next derive a sufficient separation condition ensuring accuracy at least $1-\alpha$. Fix $k\neq \ell$ and, conditional on $A=k$, write
\[
X=\mu_k+\sigma_e Z,\qquad Z\sim N(0,1).
\]
Consider again the log-posterior difference between classes $k$ and $\ell$:
\[
D_{k\ell}(X)
=
\log\frac{p_k\phi((X-\mu_k)/\sigma_e)}{p_\ell\phi((X-\mu_\ell)/\sigma_e)}.
\]
A direct simplification gives
\[
D_{k\ell}(X)
=
\log\frac{p_k}{p_\ell}+\frac{\delta_{k\ell}^2}{2}+\delta_{k\ell}Z,
\]
where $\delta_{k\ell}=|\mu_k-\mu_\ell|/\sigma_e$. Hence, under $A=k$,
\[
D_{k\ell}(X)\sim N\!\left(\log\frac{p_k}{p_\ell}+\frac{\delta_{k\ell}^2}{2},\;\delta_{k\ell}^2\right).
\]
The probability that class $\ell$ defeats class $k$ is therefore
\[
\Pr_k(\ell\text{ defeats }k)
=
\Phi\!\left(\frac{\log(p_\ell/p_k)}{\delta_{k\ell}}-\frac{\delta_{k\ell}}{2}\right).
\]
Since $p_\ell/p_k\le r_p=p_{\max}/p_{\min}$ and $\delta_{k\ell}\ge \delta_{\min}$, we obtain the uniform bound
\[
\Pr_k(\ell\text{ defeats }k)
\le
\Phi\!\left(\frac{\log r_p}{\delta_{\min}}-\frac{\delta_{\min}}{2}\right).
\]
If we require this bound to be at most $\alpha/(K-1)$, then, by the definition of
\[
z_{\alpha,K}=\Phi^{-1}\!\left(1-\frac{\alpha}{K-1}\right),
\]
it is enough that
\[
\frac{\delta_{\min}}{2}-\frac{\log r_p}{\delta_{\min}}
\ge z_{\alpha,K}.
\]
Multiplying by $\delta_{\min}>0$ gives the quadratic inequality
\[
\delta_{\min}^2-2z_{\alpha,K}\delta_{\min}-2\log r_p\ge 0.
\]
Solving for the positive root yields the sufficient condition
\[
\delta_{\min}\ge z_{\alpha,K}+\sqrt{z_{\alpha,K}^2+2\log r_p}.
\]
Under this condition, for each fixed class $k$,
\[
\Pr_k(\hat A\neq k)
\le
\sum_{\ell\ne k}\Pr_k(\ell\text{ defeats }k)
\le
(K-1)\cdot\frac{\alpha}{K-1}=\alpha,
\]
by a union bound over the $K-1$ competing classes. Hence $\Pr_k(\hat A=k)\ge 1-\alpha$ for every $k$, and averaging over $k$ gives
\[
\Pr(\hat A=A)\ge 1-\alpha.
\]
\end{proof}

\subsection{Categorical mixture classification}

\subsubsection{Proof of Theorem~\ref{thm:categorical-general}}
\label{app:proof-cat-general}
\begin{proof}
For any category pattern $j^*=(j_1^*,\ldots,j_D^*)$, denote the corresponding event by
\[
X=j^*\quad\Longleftrightarrow\quad X_d=j_d^*\ \text{for }d=1,\ldots,D.
\]
Under class $k$, the product-multinomial assumption implies
\[
\Pr(X=j^*\mid A=k)=\prod_{d=1}^D\theta_{k,d,j_d^*}.
\]
By Bayes' rule, the posterior score for class $k$ at the pattern $j^*$ is proportional to
\[
p_k\Pr(X=j^*\mid A=k)=p_k\prod_{d=1}^D\theta_{k,d,j_d^*}.
\]
Thus the posterior-mode classifier assigns $j^*$ to class $k$ exactly when, for every $\ell\neq k$,
\[
p_k\prod_{d=1}^D\theta_{k,d,j_d^*}
>
p_\ell\prod_{d=1}^D\theta_{\ell,d,j_d^*},
\]
or equivalently,
\[
\prod_{d=1}^D\frac{\theta_{k,d,j_d^*}}{\theta_{\ell,d,j_d^*}}>
\frac{p_\ell}{p_k}.
\]
(If ties occur, they are resolved by the fixed tie-breaking rule in the theorem statement.)

Conditioning on $A=k$ and summing over all patterns assigned to class $k$, we obtain
\[
\Pr(\hat A=k\mid A=k)
=
\sum_{\text{all combinations }j^*}
\left[
\prod_{\ell\neq k}\mathbbm{1}\!\left\{
\prod_{d=1}^D\frac{\theta_{k,d,j_d^*}}{\theta_{\ell,d,j_d^*}}>
\frac{p_\ell}{p_k}
\right\}
\prod_{d=1}^D\theta_{k,d,j_d^*}
\right].
\]
Finally, averaging over the latent class $A$ gives
\[
\Pr(\hat A=A)
=
\sum_{k=1}^K p_k\Pr(\hat A=k\mid A=k),
\]
which is exactly the formula stated in the theorem.
\end{proof}

\subsubsection{Proof of Theorem~\ref{thm:categorical-binary}}
\label{app:proof-cat-binary}
\begin{proof}
Because $X$ is binary, correct classification can be computed by conditioning on the two possible observed values.

When $X=1$, the posterior-mode rule predicts class $A=1$ if and only if
\[
p\theta_1\ge (1-p)\theta_2.
\]
This event is encoded by the indicator
\[
m=\mathbbm{1}\!\left(\frac{\theta_1}{\theta_2}\ge\frac{1-p}{p}\right).
\]
Likewise, when $X=0$, the classifier predicts $A=1$ if and only if
\[
p(1-\theta_1)\ge (1-p)(1-\theta_2),
\]
which is encoded by
\[
n=\mathbbm{1}\!\left(\frac{1-\theta_1}{1-\theta_2}\ge\frac{1-p}{p}\right).
\]

Now enumerate the four possible $(A,X)$ combinations.
\begin{itemize}
    \item If $A=1$ and $X=1$, the observation is classified correctly exactly when $m=1$. This contributes
    \[
    p\,\theta_1\,m.
    \]
    \item If $A=1$ and $X=0$, the observation is classified correctly exactly when $n=1$. This contributes
    \[
    p(1-\theta_1)n.
    \]
    \item If $A=0$ and $X=1$, correct classification means predicting class $0$, which happens exactly when $m=0$. This contributes
    \[
    (1-p)\theta_2(1-m).
    \]
    \item If $A=0$ and $X=0$, correct classification means predicting class $0$, which happens exactly when $n=0$. This contributes
    \[
    (1-p)(1-\theta_2)(1-n).
    \]
\end{itemize}
Summing the four contributions gives
\begin{align*}
\Pr(\hat A=A)
&=p\theta_1m+p(1-\theta_1)n+(1-p)\theta_2(1-m)+(1-p)(1-\theta_2)(1-n)\\
&=(m-n)\bigl(p\theta_1-(1-p)\theta_2\bigr)+n(2p-1)+(1-p),
\end{align*}
which proves the stated formula.
\end{proof}

\subsection{$R^2$ after residualization}

\subsubsection{Proof of Theorem~\ref{thm:R2-general}}
\label{app:proof-R2-general}
\begin{proof}
Write the linear signal as
\[
S=\bbeta_a^\top\bX^a+\bbeta_z^\top\bX^z.
\]
Under Model~\ref{model:3.1},
\[
\bX^a=\bmu^\top A+E,
\]
where $E$ has covariance matrix $\bSigma_e$ and is independent of $A$. Therefore
\[
S=\bbeta_a^\top\bmu^\top A+\bbeta_a^\top E+\bbeta_z^\top\bX^z.
\]
Since $Y=\beta_0+S+\varepsilon$ with $\varepsilon$ independent of the predictors, the population coefficient of determination when regressing $Y$ on the full predictor vector equals
\[
R_X^2=\frac{\Var(S)}{\Var(S)+\sigma_\varepsilon^2}.
\]
We now compute $\Var(S)$. The variance of the first term is
\[
\Var(\bbeta_a^\top\bmu^\top A)=\bbeta_a^\top\bmu^\top\bSigma_A\bmu\,\bbeta_a.
\]
The variance of the remaining two terms is
\[
\Var(\bbeta_a^\top E+\bbeta_z^\top\bX^z)
=
\bbeta_a^\top\bSigma_e\bbeta_a
+\bbeta_z^\top\bSigma_{xz}\bbeta_z
+2\bbeta_a^\top\bSigma_{12}\bbeta_z.
\]
Moreover, because $E$ has mean zero and is independent of $A$, and because $E(\bX^z\mid A)=0$ by construction of Model~\ref{model:3.1}, the term $\bbeta_a^\top\bmu^\top A$ is uncorrelated with $\bbeta_a^\top E+\bbeta_z^\top\bX^z$. Hence
\[
\Var(S)
=
\bbeta_a^\top\bmu^\top\bSigma_A\bmu\,\bbeta_a
+\bbeta_a^\top\bSigma_e\bbeta_a
+\bbeta_z^\top\bSigma_{xz}\bbeta_z
+2\bbeta_a^\top\bSigma_{12}\bbeta_z.
\]
Substituting into the formula for $R_X^2$ yields the stated expression.

For the fairness-adjusted regression, we regress the predictors on $A$ and retain the residuals. Under the model, the residualized version of $\bX^a$ is exactly $E$, while $\bX^z$ is unchanged because its conditional mean does not depend on $A$. Hence the adjusted predictor vector is
\[
U=(E^\top,(\bX^z)^\top)^\top.
\]
The linear projection of $Y$ onto $U$ therefore has signal
\[
S_U=\bbeta_a^\top E+\bbeta_z^\top\bX^z,
\]
whose variance is
\[
\Var(S_U)
=
\bbeta_a^\top\bSigma_e\bbeta_a
+\bbeta_z^\top\bSigma_{xz}\bbeta_z
+2\bbeta_a^\top\bSigma_{12}\bbeta_z.
\]
The total variance of $Y$ is unchanged by residualizing the predictors, so
\[
R_U^2
=
\frac{\Var(S_U)}{\Var(S)+\sigma_\varepsilon^2}.
\]
Substituting the formulas above gives the expression in the theorem.

Finally, the sample versions of the required covariance matrices are root-$n$ consistent under finite fourth moments. Since the map from the covariance parameters to the displayed rational expressions is smooth on the parameter space where the denominator is positive, a delta-method argument yields the $O_p(n^{-1/2})$ remainder terms for both $R_X^2$ and $R_U^2$.
\end{proof}

\subsubsection{Proof of Theorem~\ref{thm:R2-univariate}}
\label{app:proof-R2-univariate}
\begin{proof}
In the binary Gaussian setting, write
\[
X=\mu A+E,
\]
where $A\sim\mathrm{Ber}(p)$, $E\sim N(0,\sigma_e^2)$, and $E$ is independent of $A$. Since $\Var(A)=p(1-p)$, we have
\[
\Var(X)=\Var(\mu A)+\Var(E)=p(1-p)\mu^2+\sigma_e^2.
\]
The outcome model is
\[
Y=\beta_0+\beta_1 X+\varepsilon,
\qquad \varepsilon\sim N(0,\sigma_\varepsilon^2),
\]
with $\varepsilon$ independent of $X$. Therefore the signal variance in the full regression is
\[
\Var(\beta_1 X)=\beta_1^2\Var(X)=\beta_1^2\{p(1-p)\mu^2+\sigma_e^2\},
\]
and the total variance of $Y$ is
\[
\Var(Y)=\beta_1^2\{p(1-p)\mu^2+\sigma_e^2\}+\sigma_\varepsilon^2.
\]
Hence
\[
R_X^2
=
\frac{\beta_1^2\{p(1-p)\mu^2+\sigma_e^2\}}
{\beta_1^2 p(1-p)\mu^2+\beta_1^2\sigma_e^2+\sigma_\varepsilon^2}.
\]

To residualize $X$ with respect to $A$, note that the conditional mean is
\[
E(X\mid A)=\mu A.
\]
Thus the residual is
\[
U=X-E(X\mid A)=E.
\]
The fairness-adjusted regression therefore uses only $U=E$ as the predictor. Its signal variance is
\[
\Var(\beta_1 U)=\beta_1^2\sigma_e^2,
\]
while the total variance of $Y$ remains
\[
\Var(Y)=\beta_1^2\{p(1-p)\mu^2+\sigma_e^2\}+\sigma_\varepsilon^2.
\]
Consequently,
\[
R_U^2
=
\frac{\beta_1^2\sigma_e^2}
{\beta_1^2 p(1-p)\mu^2+\beta_1^2\sigma_e^2+\sigma_\varepsilon^2}.
\]
As in the proof of Theorem~\ref{thm:R2-general}, replacing population moments by their sample counterparts introduces $O_p(n^{-1/2})$ errors under finite fourth-moment conditions, which gives the stated asymptotic expansions.
\end{proof}

\bibliographystyle{elsarticle-harv}
206
\bibliography{references}

\end{document}